\newtheorem{problem}{Problem}
\newtheorem{theorem}{Theorem}[]
\newtheorem{lemma}{Lemma}[]
\theoremstyle{definition}
\DeclareMathOperator*{\argmin}{arg\,min}
\newcommand{\mc}[1]{\mathcal{#1}\xspace}
\newcommand{\mb}[1]{\mathbf{#1}\xspace}
\newcommand{\method}[0]{\textsc{PULL}\xspace}
\newcommand{\methodlong}[0]{\textsc{\underline{PU}-\underline{L}earning-based \underline{L}ink predictor}\xspace}
\newcommand{\ul}[1]{\underline{#1}}
\newcommand{\blue}[1]{{\color{black} #1}}
\newcommand{\appendixtitlepage}{
    \noindent
    \begin{center}
        {\LARGE \bfseries Appendix}\\[1cm]
    \end{center}
}
\title{Accurate Link Prediction for Edge-Incomplete Graphs via PU Learning}
\author{
	Junghun Kim\textsuperscript{\rm 1},
	Ka Hyun Park\textsuperscript{\rm 1},
	Hoyoung Yoon\textsuperscript{\rm 1},
	U Kang\textsuperscript{\rm 1}
}
\begin{document}

\maketitle

\begin{abstract}
Given an edge-incomplete graph, how can we accurately find its missing links?
The problem aims to discover the missing relations between entities when their relationships are represented as a graph.
Edge-incomplete graphs are prevalent in real-world due to practical limitations, such as not checking all users when adding friends in a social network.
Addressing the problem is crucial for various tasks, including recommending friends in social networks and finding references in citation networks.
However, previous approaches rely heavily on the given edge-incomplete (observed) graph, making it challenging to consider the missing (unobserved) links.

In this paper, we propose \method, an accurate link prediction method based on the positive-unlabeled (PU) learning.
\method treats the observed edges in the training graph as positive examples, and the unconnected node pairs as unlabeled ones.
\method prevents the model from blindly trusting the observed graph by proposing latent variables for unconnected node pairs, and leveraging the expected graph structure with respect to these variables.
Extensive experiments on real-world datasets show that \method consistently outperforms the baselines for predicting links in edge-incomplete graphs.
\end{abstract}

\section{Introduction}
\label{sec:introduction}

\emph{Given an edge-incomplete graph, how can we accurately find the missing links among the unconnected node pairs?}
Edge-incomplete graphs are easily encountered in real-world networks.
In social networks, connections between users can be missing since we do not check every user when adding friends.
In the context of citation networks, there may be missing citations as we do not review all published papers for citation.
In these scenarios, the objective is to find uncited references in citation networks~\cite{ShibataKS12, LiuKYQ19} or to recommend friends in social networks~\cite{DBLP:journals/chinaf/WangXWZ15, DaudHSSA20}.

The main limitation of previous works~\cite{ZhangLXWJ21, ZhuZXT21, ChamberlainSRFM23, abs-2303-08958} for link prediction is that they assume the unconnected edges in the given graph as true negative examples. 
In link prediction, the complete set of true unconnected edges are not given.
The given graph contains only a subset of ground-truth edges, while the other node pairs remain unlabeled; i.e., true unconnected edges and the unobserved ground-truth edges are mixed without labels. 
This misconception limits the model's ability to propagate information through unconnected node pairs, which may potentially form edges, resulting in over-reliance of a link predictor to the edge-incomplete graph.
Thus, it is important to consider the uncertainties of the given graph to obtain accurate linking probabilities.

\begin{figure}[t]
	\centering
	\includegraphics[width=0.475\textwidth]{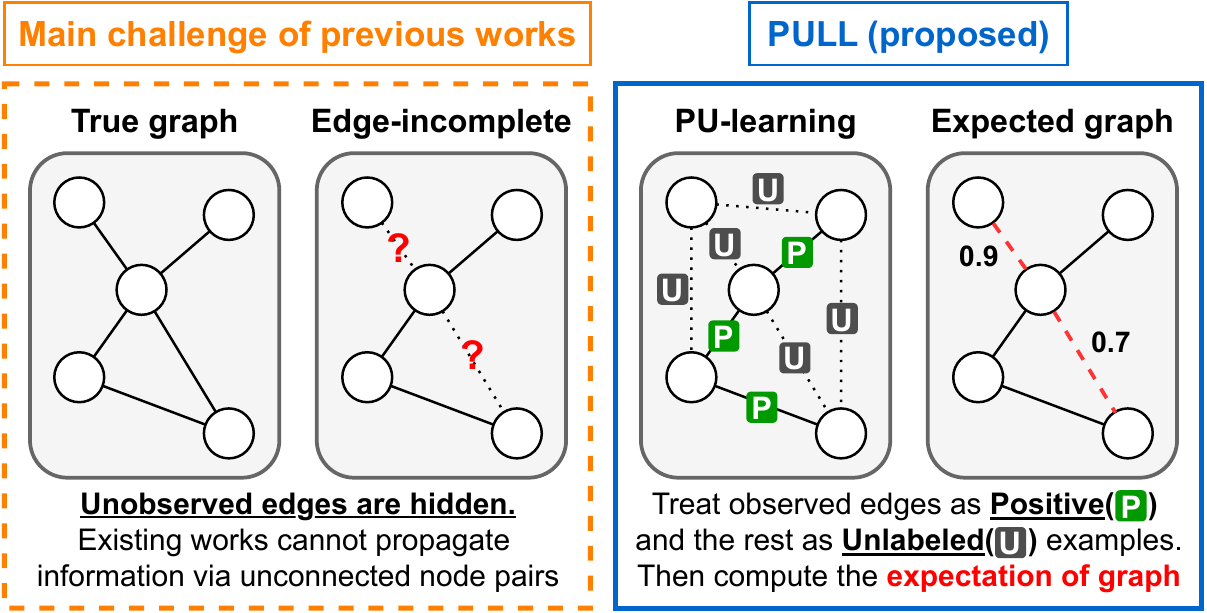}
	\caption{Main challenge of previous works.
		They cannot consider the hidden unobserved edges in the given graph.
		\method treats the unconnected node pairs as unlabeled examples, and utilizes the expectation of graph structure. 		
	}
	\label{figure:crown}
\end{figure}

In this work, we propose \method (\methodlong), an accurate link prediction method in edge-incomplete graphs.
To account for the uncertainties in the given graph while training a link predictor, \method exploits PU (Positive-Unlabeled) learning (see Related Works for details).
We treat the observed edges in the given graph as positive examples and the unconnected node pairs, which may contain hidden edges, as unlabeled examples.
We then construct an expected graph while proposing latent variables for the unlabeled (unconnected) node pairs to consider the hidden edges among them.
This enables us to effectively propagate information through the unconnected edges, improving the prediction accuracy.
The main challenge of previous works and our approach is depicted in Figure~\ref{figure:crown}. 


\begin{figure*}[t]
	\centering
	\includegraphics[width=0.85\textwidth]{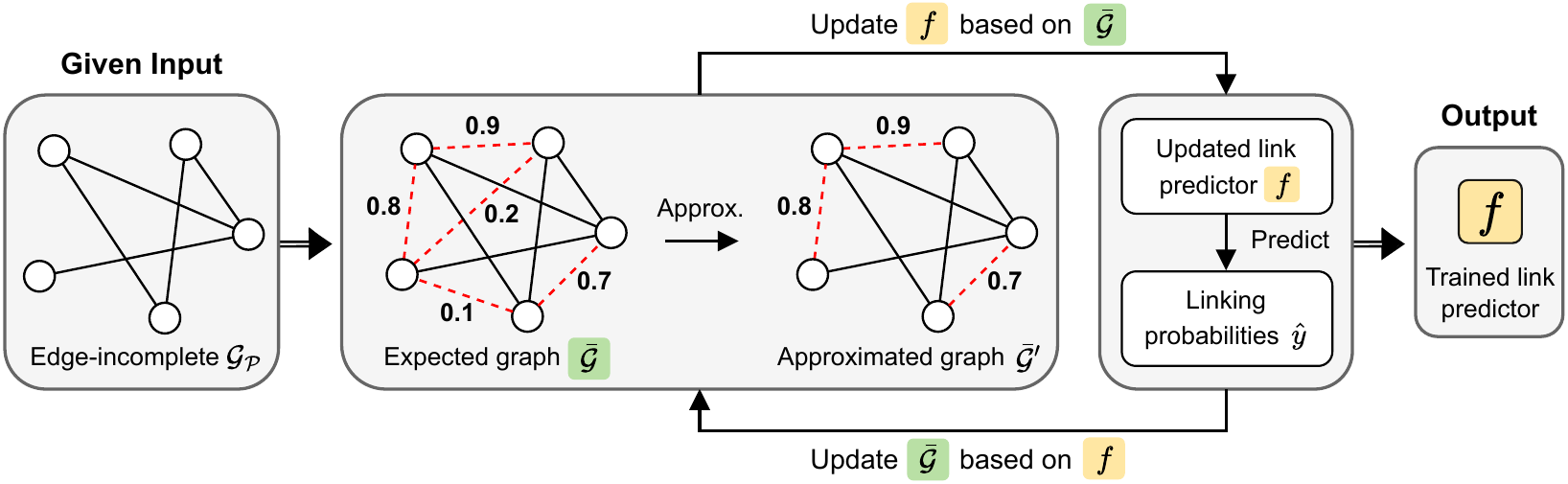}
	\caption{
		Overall structure of \method.
		Given an edge-incomplete graph $\mc{G}_\mc{P}$ with a set $\mc{P}$ of observed edges, \method first computes the expected graph structure $\mc{\bar{G}}$ by proposing latent variables for the edges.
		Then \method utilizes $\mc{\bar{G}}$ to update the link predictor $f$.
		The marginal linking probabilities $\hat{y}$ obtained by the updated $f$ are used to compute $\mc{\bar{G}}$ in the next iteration.
	}
	\label{figure:overall}
\end{figure*}

Our contributions are summarized as follows:
\begin{itemize}
\item \textbf{Method.} We propose \method, an accurate link prediction method in graphs.
	\method effectively overcomes the primary limitation of previous methods, specifically their heavy reliance on the provided edge-incomplete graph.
  \item \textbf{Theory.} We theoretically analyze \method, including the relationship with the EM algorithm and the complexity.
  \item \textbf{Experiments.} Extensive experiments on real-world datasets show that \method achieves the best performance.
\end{itemize}



\section{Related Works}
\label{sec:related}

\subsection{Link Prediction in Graphs}
\label{ssec:related-link-prediction}

\blue{Many graph convolutional networks have been proposed to perform various tasks in graph-structured data including graph classification~\cite{YooSK22}, node feature estimation~\cite{YooJJK22}, graph generation~\cite{JungPK20}, and model compression~\cite{kim2021compressing}.
Recently, link prediction has garnered significant attention due to its successful application in various domains including social networks~\citep{DaudHSSA20}, recommendation systems~\citep{afoudi2023enhanced}, and biological networks~\citep{LongWLFKCLL22}.}
Previous link prediction approaches are categorized into two groups: autoencoder-based and embedding-based approaches.

Autoencoder methods use autoencoder structure~\citep{hinton2006reducing} for the link predictor.
GAE and VGAE~\shortcite{KipfW17} are the first autoencoder-based unsupervised framework for link prediction.
ARGA and ARGVA~\shortcite{PanHLJYZ18} exploit adversarial training strategy to improve the performance of GAE and VGAE, respectively.
VGNAE~\shortcite{AhnK21} finds that autoencoder-based methods produce embeddings that converge to zero for isolated nodes.
They utilize L2-normalization to get better embeddings for these isolated nodes.
MVGAE~\shortcite{LiLWL23} is a multi-view representation model which considers both global and local topologies.
However, those autoencoder-based methods have limitations in that they cannot consider the missing edges during training.

Embedding-based approaches create compact representations of edges through information aggregation or subgraph extraction.
GCN~\shortcite{KipfW17}, GraphSAGE~\shortcite{HamiltonYL17}, and GAT~\shortcite{Petar17} aggregate information from neighbors to learn the edge embeddings, assuming adjacent nodes are similar.
1D-GNN~\shortcite{you2021identity} formalizes link prediction as a conditional node classification, incorporating the identity of the source node. 
Neo-GNN~\shortcite{abs-2206-04216} propagates information through the original graph and concatenates structural features such as the count of common neighbors.
SEAL~\shortcite{ZhangC18} and BUDDY~\shortcite{ChamberlainSRFM23} extend the link prediction problem to a subgraph classification problem.
PS2~\shortcite{TanZLZL0CH23} automatically identifies optimal subgraphs for different edges.
NESS~\shortcite{abs-2303-08958} utilizes static subgraphs during training and aggregates their representations at test time to learn embeddings in a transductive setting.
However, the primary limitation of those methods is that they assume the edges of the given graph are fully observed.
This misconception restricts the model's ability to propagate information between unconnected node pairs that might form edges, leading to over-reliance of the link predictor to the edge-incomplete graph.
%

\subsection{Graph-based PU Learning}
\label{ssec:related-pu-learning}


The objective of PU (Positive-Unlabeled) learning is to train a binary classifier that effectively distinguishes positive and negative instances when only positive and unlabeled examples are available~\cite{KiryoNPS17, ZhaoXJWH22}.
Recently, many graph-based PU learning approaches have been studied~\citep{MaZ17, ZhangRL0019, WuPDTZD19}.
GRAB~\shortcite{YooKYKJK21, YooKYKJK22} is the first approach to solve the graph-based PU learning problem without knowing the class prior in advance.
PULNS~\shortcite{LuoZCQDZWCHRL21} uses reinforcement learning to design an effective negative sample selector.
PU-GNN~\shortcite{YangZYK23} introduces distance-aware PU loss to achieve more accurate supervision.
However, those methods cannot be directly used in the link prediction problem since they aim to classify nodes, not edges, while considering the edges of the given graph as fully observed ones.
PU-AUC~\shortcite{hao2021learning} and Bagging-PU~\shortcite{gan2022positive} proposed PU learning frameworks for link prediction considering the given edges as observed positive examples.
However, their performance is constrained by the propagation of information through the edge-incomplete graph for obtaining node and edge representations.

\section{Proposed Method}
\label{sec:proposed-method}

We propose \method, an accurate method for link prediction.
We illustrate the overall process of \method in Figure~\ref{figure:overall} and Algorithm~\ref{algorithm:proposed-method}.
The main challenges and our approaches are:
\begin{enumerate}[]
	\item[\textbf{C1.}] \textbf{How can we consider the missing links?}
		We treat the given edges as observed positive examples, and the rest (unconnected edges) as unlabeled ones.
		We then propagate information through \emph{expectation of graph structure} by proposing latent variables to the unconnected edges.
	\item[\textbf{C2.}] \textbf{How can we effectively model the expected graph?}
	Naive computation of the expected graph is intractable since there are $2^{\mid\mc{E}_\mc{U}\mid}$ possible structures where $\mc{E}_\mc{U}$ is the set of unconnected edges.
		We compute the expectation of graph by carefully designing the probabilities of graphs.
	\item[\textbf{C3.}] \textbf{How can we gradually improve the performance of the link predictor?}
		\method iteratively improves the quality of the expected graph structure, which is the evidence for training the link predictor.
\end{enumerate}

\subsection{Modeling Missing Links (C1)}
\label{ssec:method-pu-learning}

In a link prediction problem, we are given a feature matrix $\mb{X}$ and an edge-incomplete graph $\mc{G}_\mc{P}$ consisting of two sets of edges, $\mc{E}_\mc{P}$ and $\mc{E}_\mc{U}$.
The set $\mc{E}_\mc{P}$ contains observed edges, while $\mc{E}_\mc{U}$ consists of unconnected node pairs; $\mc{E}_\mc{P} \cup \mc{E}_\mc{U}$ is a set of all possible node pairs.
Then we aim to find unobserved connected edges among $\mc{E}_\mc{U}$ accurately.
Existing link prediction methods treat $\mc{E}_\mc{U}$ as true negative examples, which restricts the model's ability to propagate information through unconnected node pairs that could potentially form edges.
This misleads $f$ into fitting the edge-incomplete graph, thereby degrading the prediction performance.

\method addresses this misconception of previous methods by modeling the given graph based on PU-learning.
Since there are hidden connections in $\mc{E}_\mc{U}$, we treat the unconnected edges in $\mc{E}_\mc{U}$ as unlabeled examples, and the observed edges in $\mc{E}_\mc{P}$ as positive ones.
Then we propose a latent variable $z_{ij} \in \{ 1, 0 \}$ for every edge $e_{ij}$, indicating whether there is a link between nodes $i$ and $j$ to consider the hidden connections; $z_{ij}=1$ for every $e_{ij} \in \mc{E}_\mc{P}$, but not always $z_{ij}=0$ for $e_{ij} \in \mc{E}_\mc{U}$.
We denote the graph $\mc{G}_\mc{P}$ with latent variable $\mb{z} = \{z_{ij} \mathrm{\;for\;} e_{ij} \in (\mc{E}_\mc{P} \cup \mc{E}_\mc{U})\}$ as $\mc{G}_\mc{P}(\mb{z})$.

A main challenge is that we cannot propagate information through the \emph{variablized graph} $\mc{G}_\mc{P}(\mb{z})$ since every edge $e_{ij} \in \mc{E}_\mc{U}$ of $\mc{G}_\mc{P}(\mb{z})$ is probabilistically connected.
Instead, \method exploits the expectation $\bar{\mc{G}}$ of graph $\mc{G}_\mc{P}(\mb{z})$ over the latent variables $\mb{z}$.
This enables us to train a link predictor $f$ accurately, considering the hidden connections in $\mc{E}_\mc{U}$.
Given that the estimated linking probabilities of $f$ provide prior knowledge for constructing the expected graph $\mc{\bar{G}}$, improved link predictor $f$ enhances the quality of $\mc{\bar{G}}$.
Thus, \method employs an iterative learning approach with two-steps to achieve a repeated improvement of the link predictor: a) constructing an expected graph $\mc{\bar{G}}$ based on the predicted linking probabilities from $f$, and b) updating $f$ utilizing $\mc{\bar{G}}$.
The updated $f$ is used to refine $\mc{\bar{G}}$ in the subsequent iteration.

\begin{algorithm}[t]
\SetKwInOut{Input}{Input}
\SetKwInOut{Output}{Output}
\SetAlgoLined
\small
	\Input{Edge-incomplete graph $\mc{G}_\mc{P} = (\mc{V}, \mc{E}_\mc{P})$, feature matrix $\mb{X}$, set $\mc{E}_\mc{U}$ of unconnected edges, hyperparameter $r$, and link predictor $f_\theta$}
	\Output{Best parameters $\theta$ of the link predictor $f_\theta$}
	\BlankLine
	Randomly initialize $\theta^\mathrm{new}$ , and initialize $K$ as $|\mc{E}_\mc{P}|$\;
	\Repeat{the maximum number of iterations is reached or the early stopping condition is met}{
		$\theta \leftarrow \theta^\mathrm{new}$\;
			$\bar{\mc{G}} \leftarrow \mathbb{E}_{\mb{z} \sim p(\mb{z} \mid \mb{X}, \mc{E}_\mc{P}, \theta)} [\mb{A}(\mb{z})] = \mb{A}^\mc{\bar{G}}$
		\tcp*{Equations~(\ref{eq:expected-graph-structure}, \ref{eq:expected-graph-structure-2})}
		Approximate $\bar{\mc{G}}$ to $\bar{\mc{G}}'$ by selecting $K$ confident edges from a set of candidate edges\;
		$K \leftarrow K + |\mc{E}_\mc{P}|*r$ \;
		$\theta^\mathrm{new} \leftarrow \argmin_\theta \mc{L}(\theta; \bar{\mc{G}}', \mb{X})$\tcp*{Equations~(\ref{eq:objective-0}, \ref{eq:objective-2})}
	}
 	\caption{Overall process of \method.}
	\label{algorithm:proposed-method}
\end{algorithm}

\subsection{Expectation of Graph Structure (C2)}
\label{ssec:method-expected-structure}

During training of a link predictor $f_\theta$, \method propagates information through the expected graph $\bar{\mc{G}}$ of $\mc{G}_\mc{P}(\mb{z})$ over the latent variable $p(\mb{z} \mid \mb{X}, \mc{E}_\mc{P}, \theta)$ where $\theta$ is the learnable model parameter.
Note that $\bar{\mc{G}}$ requires computing the joint probabilities $p(\mb{z} \mid \mb{X}, \mc{E}_\mc{P}, \theta)$ for all possible graph structures $\mc{G}_\mc{P}(\mb{z})$.
This is intractable since there are $2^{\mid\mc{E}_\mc{U}\mid}$ possible states of $\mb{z}$ in $\mc{G}_\mc{P}(\mb{z})$.
Instead, \method efficiently computes the expectation of graph structure by carefully designing the joint probability $p(\mb{z} \mid \mb{X}, \mc{E}_\mc{P}, \theta)$.
In the following, we describe how we design the joint probability and compute the expectation of graph.

\subsubsection{Designing the joint probability.}
We convert the graph $\mc{G}_\mc{P}(\mb{z})$ into a line graph $L(\mc{G}_\mc{P}(\mb{z})) = (\mc{V}_L, \mc{E}_L)$ where nodes in $L(\mc{G}_\mc{P}(\mb{z}))$ represent the edges of $\mc{G}_\mc{P}(\mb{z})$, and two nodes in $L(\mc{G}_\mc{P}(\mb{z}))$ are connected if their corresponding edges in $\mc{G}_\mc{P}(\mb{z})$ are adjacent.
$\mc{V}_L$ contains both $\mc{E}_\mc{P}$ and $\mc{E}_\mc{U}$ of $\mc{G}_\mc{P}(\mb{z})$ since every node pair $(i, j)$ in $\mc{G}_\mc{P}(\mb{z})$ is correlated with variable $z_{ij}$.
We then consider the line graph as a pairwise Markov network, which assumes that any two random variables in the network are conditionally independent of each other given the rest of the variables if they are not directly connected~\citep{koller2009probabilistic}.
This simplifies the probabilistic modeling on graph-structured random variables, and effectively marginalizes the joint distribution of nodes in $L(\mc{G}_\mc{P}(\mb{z}))$, which corresponds to the distribution $p(\mb{z} \mid \mb{X}, \mc{E}_\mc{P}, \theta)$ of edges in the original graph $\mc{G}_\mc{P}(\mb{z})$.

With the Markov property, the distribution $p(\mb{z} \mid \mb{X}, \mc{E}_\mc{P}, \theta)$ is computed by the multiplication of all the node and edge potentials in the line graph $L(\mc{G}_\mc{P}(\mb{z}))$:
\begin{small}
\begin{multline}
	p(\mb{z} \mid \mb{X}, \mc{E}_\mc{P}, \theta) = \\
	 \frac{1}{F} \prod_{ij \in \mc{V}_L} \phi_{ij}(z_{ij} \mid \mb{X}, \theta) \prod_{(ij,jk) \in \mc{E}_L} \psi_{ij, jk}(z_{ij}, z_{jk} \mid \mb{X}, \theta)
\label{equation:mrf-1}
\end{multline}
\end{small}
where $\phi_{ij}$ is the node potential for a transformed node $ij$, and $\psi_{ij, jk}$ is the edge potential for a transformed edge $(ij,jk)$.
The node potential $\phi_{ij}$ represents the unnormalized marginal linking probability between nodes $i$ and $j$ in the original graph $\mc{G}_\mc{P}(\mb{z})$.
The edge potential $\psi_{ij, jk}$ denotes a degree of homophily between the edges containing a common node in $\mc{G}_\mc{P}(\mb{z})$.
$F$ is the normalizing factor that ensures the distribution adds up to one.
For simplicity, we omit $\mb{X}$ in $\phi_{ij}$ and $\psi_{ij, jk}$ in the rest of the paper.

We define the node potential $\phi_{ij}$ of $L(\mc{G}_\mc{P}(\mb{z}))$ as follows to make nodes in $\mc{G}_\mc{P}(\mb{z})$ with similar hidden representations have a higher likelihood of connection:
\begin{align*}
\small
	\phi_{ij} (z_{ij}=1 \mid \theta) =
\begin{dcases*}
	1 & if $e_{ij} \in \mc{E}_\mc{P}$ \\
	f_\theta(i, j) & otherwise \\
\end{dcases*}
\label{eq:node-potential}
\end{align*}
where $\phi_{ij} (z_{ij} = 0 \mid \theta) = 1 - \phi_{ij} (z_{ij} = 1 \mid \theta)$ and $f_\theta(i, j)$ is the predicted marginal linking probability between nodes $i$ and $j$.
We set $\phi_{ij} (z_{ij} = 1 \mid \theta) = 1$ for $e_{ij} \in \mc{E}_\mc{P}$ since the linking probability of an observed edge is 1.
We use a GCN followed by a sigmoid function as the link predictor: $f_\theta(i, j) = \sigma(h_i \cdot h_j)$ where $h_i$ is the hidden representation of node $i$ computed by the GCN embedding function with graph $\bar{\mc{G}}$.
In the first iteration, we initialize $\bar{\mc{G}}$ with $\mc{G}_\mc{P}$.
Other graph-based models can also be used instead of GCN (see Q2 of Experiments section).
We define $\psi_{ij, jk}$ as a constant $c$ to make the joint distribution focus on the marginal linking probabilities.
Then the normalizing constant $F$ in Equation~\eqref{equation:mrf-1} becomes $c^{|\mc{E}_L|}$ as $\sum_{\mb{z}} \prod_{ij \in \mc{V}_L} \phi_{ij}(z_{ij} \mid \theta)=1$ (see Lemma~1 in Appendix for proof).

As a result, the probability $p(\mb{z} \mid \mb{X}, \mc{E}_\mc{P}, \theta)$ is expressed by the multiplication of node potentials $\phi_{ij}$ for $e_{ij} \in \mc{E}_\mc{U}$:
\begin{equation}
\small
\begin{split}
	p(\mb{z} \mid \mb{X}, \;& \mc{E}_\mc{P}, \theta) = \prod_{ij \in \mc{V}_L} \phi_{ij}(z_{ij} \mid \theta) \\
	&= \prod_{e_{ij} \in (\mc{E}_\mc{P} \cup \mc{E}_\mc{U})}\phi_{ij}(z_{ij} \mid \theta) = \prod_{e_{ij} \in \mc{E}_\mc{U}}\phi_{ij}(z_{ij} \mid \theta).	
\end{split}
\label{equation:mrf-2}
\end{equation}

\subsubsection{Computing the expectation of graph.}
Let $\mb{A}(\mb{z})$ be the adjacency matrix representing the state $\mb{z}$ where the $(i,j)$-th component of $\mb{A}(\mb{z})$, which we denote as $\mb{A}(\mb{z})_{ij}$, is $z_{ij} \in \{1, 0\}$.
Then the corresponding weighted adjacency matrix $\mb{A}^\mc{\bar{G}}$ of the expected graph $\mc{\bar{G}}$ is computed as follows:
\begin{equation}
\small
\begin{split}
	\mb{A}^\mc{\bar{G}} = \mathbb{E}_{\mb{z} \sim p(\mb{z} \mid \mc{E}_\mc{P}, \theta)} [\mb{A}(\mb{z})] &= \sum_{\mb{z}} p( \mb{z} \mid \mb{X}, \mc{E}_\mc{P}, \theta) \mb{A}(\mb{z}) \\
	&= \sum_{\mb{z}} \prod_{e_{ij} \in \mc{E}_\mc{U}} \phi_{ij}(z_{ij} \mid \theta) \mb{A}(\mb{z}).
\end{split}
\label{eq:expected-graph-structure}
\end{equation}
The $(i, j)$-th component $\mb{A}^\mc{\bar{G}}_{ij}$ of $\mb{A}^\mc{\bar{G}}$ is expressed as
\begin{equation}
\small
\begin{split}
	&\phi_{ij}(z_{ij}=1 \mid \theta) \sum_{\scriptscriptstyle \mb{z} \mid z_{ij}=1} \prod_{\scriptscriptstyle e_{kl} \in \mc{E}_\mc{U} \setminus \{e_{ij}\}} \phi_{kl}(z_{kl} \mid \theta) \mb{A}(\mb{z})_{ij} \\
	&= \phi_{ij}(z_{ij}=1 \mid \theta)
\end{split}
\label{eq:expected-graph-structure-2}
\end{equation}
where the equality holds since $\mb{A}(\mb{z})_{ij}=1$ for $z_{ij}=1$, and $\sum_{\mb{z} \mid z_{ij}=1} \prod_{e_{kl} \in \mc{E}_\mc{U} \setminus \{e_{ij}\}} \phi_{kl}(z_{kl} \mid \theta) = 1$ (see Lemma~1 in Appendix for proof).
As a result, we simply express the expected graph $\mc{\bar{G}}$ by an weighted adjacency matrix $\mb{A}^\mc{\bar{G}}$ where $\mb{A}^\mc{\bar{G}}_{ij} = \phi_{ij}(z_{ij}=1 \mid \theta)$.

\subsubsection{Approximating the expected graph.}
Propagating information through $\mc{\bar{G}}$ directly may lead to oversmoothing problem, as $\mc{\bar{G}}$ is a fully connected graph represented by $\mb{A}^\mc{\bar{G}}$.
Moreover, the training time increases exponentially with the number of nodes.
To address these challenges, \method utilizes an approximated one of $\mc{\bar{G}}$, which contains edges with high confidence.
Specifically, we keep the top-$K$ edges with the largest weights while removing the rest.
We denote this approximated one as $\bar{\mc{G}}'$, and its adjacency matrix as $\mb{A}^{\mc{\bar{G}}'}$.
From the perspective of PU learning, selecting edges in $\mc{\bar{G}}$ can be viewed as selecting relatively connected edges among the unlabeled ones, while treating the rest as relatively unconnected edges.
We gradually increase the number $K$ of selected edges in proportion to that of observed edges through the outer iteration of Algorithm~\ref{algorithm:proposed-method}, which is expressed by $K \leftarrow K + r|\mc{E}_\mc{P}|$, giving more trust in the expected graph $\bar{\mathcal{G}}$.
This is because the quality of $\bar{\mathcal{G}}$ improves through the iterations (Figure~\ref{figure:iteration}).
We set $r=0.05$ in our experiments.
%

Another challenge lies in the need to compute weights for every node pair in each outer iteration to acquire the top-$K$ edges, which results in computational inefficiency.
To address this, we define a set of candidate edges determined by the node degrees.
This stems from the observation that nodes with higher degrees exhibit a greater likelihood of forming new connections in real-world networks~\cite{Albert99}.
The candidate edge set consists of node pairs where at least one node has top-$M$ degree among all the nodes.
We set $M=100$ in our experiments.
\method selects top-$K$ edges among the candidate edge set instead of all node pairs to approximate $\mc{\bar{G}}$.


\subsection{Iterative Learning of Link Predictor (C3)}
\label{ssec:method-loss}

At each iteration, \method constructs the expected graph $\bar{\mc{G}}$ given a trained link predictor $f_{\theta}$ with current parameter $\theta$.
Then \method propagates information through $\bar{\mc{G}}'$, which is the approximated version of $\bar{\mc{G}}$, to train a new link predictor $f_{\theta^\mathrm{new}}$ with new parameter $\theta^\mathrm{new}$.
This effectively prevents the link predictor from blindly trusting the given edge-incomplete $\mc{G}_\mc{P}$.

To optimize the new parameter $\theta^\mathrm{new}$, we propose the binary cross entropy (BCE) loss $\mc{L}_E$ in Equation~\eqref{eq:objective-0} by treating the given edges in $\mc{E}_\mc{P}$ and the unconnected edges in $\mc{E}_\mc{U}$ as positive and unlabeled (PU) examples, respectively.
For the unconnected edges, we use the expected linking probabilities $\mb{A}^\mc{\bar{G}'}_{ij}$ as pseudo labels for $e_{ij}$:
\begin{equation}
\small
\begin{split}
	\mc{L}_E &= - \sum_{e_{ij} \in \mc{E}_\mc{P}} \log \hat{y}_{ij} -\sum_{e_{ij} \in \mc{E}_{\mc{U}}^r} \log (1-\hat{y}_{ij}) \\
		& \quad - \sum_{e_{ij} \in \mc{E}_\mc{P}^r} \bigl( \mb{A}^\mc{\bar{G}'}_{ij} \log \hat{y}_{ij} + (1-\mb{A}^\mc{\bar{G}'}_{ij}) \log (1-\hat{y}_{ij}) \bigr)
\end{split}
\label{eq:objective-0}
\end{equation}
where $\hat{y}_{ij} = f_{\theta^\mathrm{new}}(i, j)$.
$\mc{E}_\mc{P}^r$ is the set of relatively connected edges selected from $\mc{E}_\mc{U}$ when approximating the expected graph structure $\mc{\bar{G}}$ by $\mc{\bar{G}}'$, and $\mc{E}_\mc{U}^r = \mc{E}_\mc{U} \setminus \mc{E}_\mc{P}^r$.

However, in real-world graphs, there is a severe imbalance between the numbers of connected edges and unconnected ones.
We balance them by randomly sampling $|\mc{E}_\mc{P} \cup \mc{E}_\mc{P}^r|$ unconnected edges among $\mc{E}_\mc{U}^r$ for every epoch.
We denote the loss $\mc{L}_E$ with a set $\mc{E}_{\mc{U}}'$ of randomly sampled edges among $\mc{E}_{\mc{U}}^r$ with size $|\mc{E}_{\mc{U}}'| = |\mc{E}_\mc{P} \cup \mc{E}_\mc{P}^r|$ as $\mc{L}'_E$.

If the current parameter $\theta$ of the link predictor is inaccurate, the quality of the expected graph structure deteriorates, leading to the next iteration's parameter $\theta^\mathrm{new}$ becoming even more inaccurate.
Thus, we propose another loss term $\mc{L}_C$ for correction, which measures the BCE for all observed edges and randomly sampled unconnected edges from $\mc{E}_\mc{U}^r$:
\begin{equation}
\small
	\mc{L}_C = - \sum_{e_{ij} \in \mc{E}_\mc{P}} \log \tilde{y}_{ij} - \sum_{e_{kl} \in \mc{E}_\mc{U}''} \log (1-\tilde{y}_{ij})
\label{eq:objective-2}
\end{equation}
where $\mc{E}_\mc{U}''$ is the set of randomly sampled node pairs from $\mc{E}_\mc{U}^r$ with size $|\mc{E}_\mc{U}''| = |\mc{E}_\mc{P}|$.
$\tilde{y}_{ij}$ is $f_{\theta^\mathrm{new}}(i, j)$ computed with the given graph $\mc{G}_\mc{P}$.
$\mathcal{L}_C$ effectively prevents excessive self-reinforcement in the link predictor of \method (Figure~\ref{figure:iteration}).

As a result, \method finds the best parameter $\theta^\mathrm{new}$ for each iteration by minimizing the sum of the two loss terms $\mathcal{L}'_E$ and $\mathcal{L}_C$.
We denote the final loss function as $\mc{L}(\theta^\mathrm{new}; \mc{\bar{G}}', \mb{X}) = \mc{L}_E' + \mc{L}_C$.
The new parameter $\theta^\mathrm{new}$ is used as the current $\theta$ for the next iteration.
The iterations stop if the maximum number of iterations is reached or an early stopping condition is met.

\subsection{Theoretical Analysis}
\label{ssec:theory}

We theoretically analyze \method in terms of its connection to the EM algorithm, and the time complexity.

\subsubsection{Relation of \method to EM algorithm.}
EM (Expectation-Maximization) \cite{dempster1977maximum} is an iterative method used for estimating model parameter $\theta$ when there are unobserved data.
It assigns latent variables $\mb{z}$ to the unobserved data, and maximizes the expectation of the log likelihood $\log p(\mb{y}, \mb{z} \mid \mb{X}, \theta)$ in terms of $\mb{z}$ to optimize $\theta$ where $\mb{y}$ and $\mb{X}$ are target and input variables, respectively.

In our problem, the expectation of the log likelihood given the current parameter $\theta$ is written as follows:
\begin{equation}
	Q(\theta^\mathrm{new} \mid \theta) =
	\mathbb{E}_{\mb{z} \sim p(\mb{z} \mid \mb{X}, \mc{E}_\mc{P}, \theta)} [\log p(\mc{E}_\mc{P}, \mb{z} \mid \mb{X}, \theta^\mathrm{new})]
\label{eq:em-1}
\end{equation}
where $\theta^\mathrm{new}$ is the new parameter.
The EM algorithm finds $\theta^\mathrm{new}$ that maximizes $Q(\theta^\mathrm{new} \mid \theta)$, and they are used as $\theta$ in the next iteration.
The algorithm is widely used in situations involving latent variables as it always improves the likelihood $Q$ through the iterations~\citep{murphy2012machine}.

\method iteratively optimizes the parameter $\theta$ of a link predictor by minimizing both $\mc{L}_E'$ and $\mc{L}_C$ where $\mc{L}_E'$ is the approximation of $\mc{L}_E$ in Equation~\eqref{eq:objective-0}.
We compare Equations~(\ref{eq:objective-0}) and (\ref{eq:em-1}) to show the similarity between the iterative minimization of $\mc{L}_E$ in \method and the iterative maximization of $Q(\theta^\mathrm{new} \mid \theta)$ in the EM algorithm.

\method expresses $p(\mb{z} \mid \mb{X}, \mc{E}_\mc{P}, \theta)$ in Equation~\eqref{eq:em-1} by the multiplication of node potentials as in Equation~\eqref{equation:mrf-2}.
%
For the joint probability $p(\mc{E}_\mc{P}, \mb{z} \mid \mb{X}, \theta^\mathrm{new})$ in Equation~\eqref{eq:em-1}, we approximate it using a link predictor $f_{\theta^\mathrm{new}}$ with new parameter $\theta^\mathrm{new}$.
We consider $f_{\theta_\mathrm{new}}$ as a marginalization function that gives marginal linking probabilities for each node pair.
We also assume that the marginal distributions obtained by $f_{\theta_\mathrm{new}}$ are mutually independent.
Then the joint probability is approximated as follows:
\begin{equation}
\small
	p(\mc{E}_\mc{P}, \mb{z} \mid \mb{X}, \theta^\mathrm{new}) \approx \prod_{e_{ij} \in \mc{E}_\mc{P}} \hat{y}_{ij} \prod_{e_{ij} \in \mc{E}_\mc{U}} \bigl( z_{ij} \hat{y}_{ij} + (1-z_{ij}) (1-\hat{y}_{ij}) \bigr)
\label{eq:theory-2}
\end{equation}
where $\hat{y}_{ij} = f_{\theta^\mathrm{new}}(i, j)$, and $z_{ij} \in \{ 1, 0 \}$ represents the connectivity between nodes $i$ and $j$.

Using Equations~(\ref{equation:mrf-2}) and (\ref{eq:theory-2}), we derive Theorem~\ref{theorem:1} that shows the similarity between the iterative minimization of $\mc{L}_E$ in \method and the iterative maximization of $Q$ in the EM.
\begin{theorem}
Given the assumption in Equation (\ref{eq:theory-2}), the likelihood $Q(\theta^\mathrm{new} \mid \theta)$ of the EM algorithm reduces to the negative of the loss $\mc{L}_E$ of \method with the expected graph $\mc{\bar{G}}$:
\begin{small}
\begin{multline}
	Q(\theta^\mathrm{new} \mid \theta) \approx \sum_{e_{ij} \in \mc{E}_\mc{P}} \log \hat{y}_{ij} + \\
	\sum_{e_{ij} \in \mc{E}_\mc{U}} \bigl( \mb{A}^\mc{\bar{G}}_{ij} \log \hat{y}_{ij} + (1-\mb{A}^\mc{\bar{G}}_{ij}) \log (1-\hat{y}_{ij}) \bigr)
\end{multline}
\end{small}
where $\hat{y}_{ij}$ is the estimated linking probability between nodes $i$ and $j$ by $f_{\theta^\mathrm{new}}$, and $\mb{A}^\mc{\bar{G}}$ is the corresponding adjacency matrix of $\mc{\bar{G}}$ (see Appendix for proof).
\label{theorem:1}
\end{theorem}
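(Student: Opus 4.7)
The plan is to unfold both factors inside the expectation using the two explicit forms already available in the excerpt, then collapse the resulting integral over $\mathbf{z}$ into the per-edge expectations $\mathbb{E}[z_{ij}] = \mathbf{A}^{\bar{\mathcal{G}}}_{ij}$. Concretely, I would first substitute the approximation from Equation~(\ref{eq:theory-2}) into the definition of $Q(\theta^{\mathrm{new}} \mid \theta)$. Because $\log$ is multiplicative-to-additive, this turns $\log p(\mathcal{E}_\mathcal{P}, \mathbf{z} \mid \mathbf{X}, \theta^{\mathrm{new}})$ into a sum of per-edge log terms, one group over $\mathcal{E}_\mathcal{P}$ (independent of $\mathbf{z}$) and one group over $\mathcal{E}_\mathcal{U}$. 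The $\mathcal{E}_\mathcal{P}$ terms pull straight out of the expectation as $\sum_{e_{ij}\in\mathcal{E}_\mathcal{P}}\log \hat{y}_{ij}$.

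Next I would simplify the $\mathcal{E}_\mathcal{U}$ summands. Since each $z_{ij} \in \{0,1\}$, the identity
\begin{equation*}
\log\bigl( z_{ij}\hat{y}_{ij} + (1-z_{ij})(1-\hat{y}_{ij}) \bigr) \;=\; z_{ij}\log\hat{y}_{ij} + (1-z_{ij})\log(1-\hat{y}_{ij})
\end{equation*}
holds exactly, which linearizes the dependence on $z_{ij}$. By linearity of expectation, the $\mathcal{E}_\mathcal{U}$ contribution becomes $\sum_{e_{ij}\in\mathcal{E}_\mathcal{U}} \bigl( \mathbb{E}[z_{ij}]\log\hat{y}_{ij} + (1-\mathbb{E}[z_{ij}])\log(1-\hat{y}_{ij})\bigr)$, where the expectation is taken under $p(\mathbf{z} \mid \mathbf{X}, \mathcal{E}_\mathcal{P}, \theta)$.

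It then remains to identify $\mathbb{E}[z_{ij}]$ with $\mathbf{A}^{\bar{\mathcal{G}}}_{ij}$. This is exactly the content of Equations~(\ref{eq:expected-graph-structure}) and~(\ref{eq:expected-graph-structure-2}): using the factorized form of $p(\mathbf{z} \mid \mathbf{X}, \mathcal{E}_\mathcal{P}, \theta)$ from Equation~(\ref{equation:mrf-2}) and the marginalization identity proved in Lemma~1 of the appendix, the sum over all $\mathbf{z}$ with $z_{ij}=1$ collapses to $\phi_{ij}(z_{ij}=1\mid\theta) = \mathbf{A}^{\bar{\mathcal{G}}}_{ij}$. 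Plugging this back yields exactly the stated expression, which is the negative of $\mathcal{L}_E$ evaluated with pseudo-labels $\mathbf{A}^{\bar{\mathcal{G}}}_{ij}$ on $\mathcal{E}_\mathcal{U}$.

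The hard part is not the algebra itself but justifying that the two factorizations interact cleanly: the node-potential factorization of $p(\mathbf{z}\mid\mathbf{X},\mathcal{E}_\mathcal{P},\theta)$ makes the $z_{ij}$'s mutually independent, and only under that independence does the expectation distribute across the per-edge $\log$ terms so that the single-variable marginal $\mathbb{E}[z_{ij}]=\phi_{ij}(z_{ij}=1\mid\theta)$ suffices. I would therefore state this independence explicitly (it follows from the edge-potential choice $\psi_{ij,jk}\equiv c$ discussed before Equation~(\ref{equation:mrf-2})) before invoking linearity of expectation, and then cite Lemma~1 to close the argument. The observed-edge summands require no expectation since $z_{ij}=1$ is deterministic for $e_{ij}\in\mathcal{E}_\mathcal{P}$, matching the first term of the claimed identity exactly.
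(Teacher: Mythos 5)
Your proposal is correct and follows essentially the same route as the paper's proof: split the log-likelihood into the $\mc{E}_\mc{P}$ part (deterministic, pulls out of the expectation) and the $\mc{E}_\mc{U}$ part, use the node-potential factorization of $p(\mb{z} \mid \mb{X}, \mc{E}_\mc{P}, \theta)$ together with Lemma~1 to collapse the sum over $\mb{z}$ to per-edge marginals, and identify $\phi_{ij}(z_{ij}=1\mid\theta)$ with $\mb{A}^\mc{\bar{G}}_{ij}$ via Equation~(4). Your linearization identity $\log\bigl(z_{ij}\hat{y}_{ij}+(1-z_{ij})(1-\hat{y}_{ij})\bigr)=z_{ij}\log\hat{y}_{ij}+(1-z_{ij})\log(1-\hat{y}_{ij})$ for binary $z_{ij}$ is a clean, equivalent restatement of the paper's direct marginalization step, and your explicit remark that the mutual independence induced by the constant edge potentials is what licenses the per-edge collapse is a point the paper leaves implicit.
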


\subsubsection{Complexity of PULL.}
PULL is scalable to large graphs due to its linear scalability with the number of nodes and edges.
Let $n_o$ and $n_i$ be the number of outer and inner iterations in Algorithm~\ref{algorithm:proposed-method}, respectively.
For simplicity, we assume \method has $m$ layers where the number $d$ of features for each node is the same in all layers.

\begin{theorem}
The time complexity of \method (Algorithm~\ref{algorithm:proposed-method}) is
\begin{gather*}
	O \left( n_o d( (n_i m + r n_o n_i m)|\mc{E}_\mc{P}| + ( n_i m d + M ) |\mc{V}| )  \right),
\end{gather*}
which is linear to the numbers $|\mc{V}|$ and $|\mc{E}_\mc{P}|$ of nodes and edges in $\mc{G}_\mc{P}$, respectively (see Appendix for proof).
\label{theorem:time-complexity}
\end{theorem}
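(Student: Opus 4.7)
The plan is to decompose the cost of a single outer iteration into (i) building the approximated expected graph $\bar{\mathcal{G}}'$ and (ii) the $n_i$ inner gradient steps that train $f_{\theta^\mathrm{new}}$, then sum over the $n_o$ outer iterations and simplify. First I would bound the size of $\bar{\mathcal{G}}'$ at outer iteration $t$: since $K$ starts at $|\mathcal{E}_\mathcal{P}|$ and is incremented by $r|\mathcal{E}_\mathcal{P}|$ per outer round, we have $K_t = (1+tr)|\mathcal{E}_\mathcal{P}| = O((1+r n_o)|\mathcal{E}_\mathcal{P}|)$, so $\bar{\mathcal{G}}'$ contains at most $O((1+r n_o)|\mathcal{E}_\mathcal{P}|)$ edges. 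The candidate edge set used to approximate $\bar{\mathcal{G}}$—pairs incident to at least one of the top-$M$-degree nodes—has size $O(M|\mathcal{V}|)$.

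For the construction step, computing every node potential $\phi_{ij} = \sigma(h_i \cdot h_j)$ on the candidate set costs $O(Md|\mathcal{V}|)$ once the embeddings are available, and a linear-time top-$K_t$ selection over $O(M|\mathcal{V}|)$ candidates is absorbed into this bound. For each of the $n_i$ inner updates, a forward/backward pass of the $m$-layer GCN on $\bar{\mathcal{G}}'$ takes $O(m d K_t + m d^2 |\mathcal{V}|)$ by the standard sparse message-passing analysis, and evaluating the balanced losses $\mathcal{L}'_E$ and $\mathcal{L}_C$ on $O(|\mathcal{E}_\mathcal{P}| + K_t)$ sampled edges is dominated by this propagation cost. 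Hence one outer iteration costs $O(n_i m d K_t + n_i m d^2 |\mathcal{V}| + Md|\mathcal{V}|)$.

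Multiplying by $n_o$ and substituting $K_t = O((1+r n_o)|\mathcal{E}_\mathcal{P}|)$ gives $O(n_o n_i m d|\mathcal{E}_\mathcal{P}| + r n_o^2 n_i m d|\mathcal{E}_\mathcal{P}| + n_o n_i m d^2 |\mathcal{V}| + n_o M d|\mathcal{V}|)$, which factors as $O(n_o d((n_i m + r n_o n_i m)|\mathcal{E}_\mathcal{P}| + (n_i m d + M)|\mathcal{V}|))$ as claimed. Linearity in $|\mathcal{V}|$ and $|\mathcal{E}_\mathcal{P}|$ is then immediate.

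The main obstacle is bookkeeping the growth of $K_t$ across outer iterations, which is what produces the $r n_o$ factor in the first bracket, together with verifying that the candidate-set evaluation and the top-$K$ selection remain linear in $M|\mathcal{V}|$ rather than scaling with $|\mathcal{V}|^2$; otherwise the derivation is a routine aggregation of standard GCN propagation costs and no deeper technical difficulty is expected.
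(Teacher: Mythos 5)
Your proposal is correct and follows essentially the same route as the paper's proof: the same per-iteration decomposition into candidate-set evaluation costing $O(Md|\mathcal{V}|)$ and $n_i$ GCN passes costing $O(n_i m d((1+rk)|\mathcal{E}_\mathcal{P}| + d|\mathcal{V}|))$, the same bound $1+rk \le 1+rn_o$ on the growing edge count, and the same final aggregation over $n_o$ outer iterations. Your added remarks on the linear-time top-$K$ selection and the loss-evaluation cost being dominated by propagation are details the paper leaves implicit but change nothing substantive.
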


\section{Experiments}
\label{sec:exp}

\begin{table*}[t]
\centering
\caption{
	The link prediction accuracy of \method and baselines in terms of AUROC and AUPRC.
	Bold numbers denote the best performance, and underlined ones represent the second-best accuracy.
	PULL outperforms all the baselines in most of the cases.
}

\scalebox{0.62}{
\begin{tabular}{@{}l|cccccccccc@{}}
\toprule
\multicolumn{11}{c}{Missing ratio $r_m$ = 0.1} \\ \midrule
\multicolumn{1}{l|}{\multirow{2}{*}{\textbf{Model}}} & \multicolumn{2}{c}{\textbf{PubMed}} & \multicolumn{2}{c}{\textbf{Cora-full}} & \multicolumn{2}{c}{\textbf{Chameleon}} & \multicolumn{2}{c}{\textbf{Crocodile}} & \multicolumn{2}{c}{\textbf{Facebook}} \\
 & AUROC & AUPRC & AUROC & AUPRC & AUROC & AUPRC & AUROC & AUPRC & AUROC & AUPRC \\ \midrule
GCN+CE & {\ul{96.45 $\pm$ 0.23}} & {\ul{96.58 $\pm$ 0.21}} & 95.77 $\pm$ 0.65 & 95.77 $\pm$ 0.74 & 96.77 $\pm$ 0.35 & 96.67 $\pm$ 0.40 & 96.91 $\pm$ 0.46 & 97.22 $\pm$ 0.45 & {\ul{97.06 $\pm$ 0.18}} & 97.33 $\pm$ 0.19 \\
GAT+CE & 90.99 $\pm$ 0.40 & 89.64 $\pm$ 0.49 & 94.27 $\pm$ 0.38 & 93.74 $\pm$ 0.43 & 91.55 $\pm$ 1.82 & 91.39 $\pm$ 1.72 & 90.65 $\pm$ 1.83 & 91.67 $\pm$ 1.40 & 92.43 $\pm$ 0.62 & 92.04 $\pm$ 0.77 \\
SAGE+CE & 87.22 $\pm$ 1.14 & 88.34 $\pm$ 0.99 & 94.35 $\pm$ 0.54 & 94.77 $\pm$ 0.60 & 96.30 $\pm$ 0.48 & 95.87 $\pm$ 0.63 & 96.00 $\pm$ 0.61 & 96.55 $\pm$ 0.55 & 95.17 $\pm$ 0.52 & 95.34 $\pm$ 0.54 \\
GAE & 96.35 $\pm$ 0.17 & 96.46 $\pm$ 0.15 & 95.51 $\pm$ 0.31 & 95.52 $\pm$ 0.32 & 96.88 $\pm$ 0.48 & 96.80 $\pm$ 0.54 & 96.67 $\pm$ 0.70 & 96.78 $\pm$ 1.17 & 97.00 $\pm$ 0.17 & 97.27 $\pm$ 0.13 \\
VGAE & 94.61 $\pm$ 1.01 & 94.74 $\pm$ 1.00 & 92.37 $\pm$ 3.89 & 92.40 $\pm$ 3.68 & 96.32 $\pm$ 0.27 & 96.20 $\pm$ 0.26 & 95.29 $\pm$ 0.40 & 95.45 $\pm$ 0.82 & 96.29 $\pm$ 0.27 & 96.49 $\pm$ 0.28 \\
ARGA & 93.67 $\pm$ 0.71 & 93.35 $\pm$ 0.73 & 91.39 $\pm$ 1.02 & 90.72 $\pm$ 1.15 & 94.76 $\pm$ 0.51 & 94.37 $\pm$ 0.71 & 96.03 $\pm$ 0.38 & 95.65 $\pm$ 0.35 & 92.03 $\pm$ 0.59 & 92.19 $\pm$ 0.48 \\
ARGVA & 93.56 $\pm$ 1.21 & 93.80 $\pm$ 1.11 & 89.88 $\pm$ 3.13 & 89.59 $\pm$ 2.88 & 94.26 $\pm$ 0.74 & 94.32 $\pm$ 0.70 & 95.04 $\pm$ 0.18 & 94.32 $\pm$ 0.59 & 92.35 $\pm$ 2.58 & 92.76 $\pm$ 2.36 \\
VGNAE & 95.99 $\pm$ 0.63 & 95.99 $\pm$ 0.55 & 95.42 $\pm$ 1.23 & 95.14 $\pm$ 1.34 & {\ul{97.46 $\pm$ 0.43}} & {\ul{97.17 $\pm$ 0.53}} & 96.34 $\pm$ 0.76 & 95.29 $\pm$ 1.87 & 95.79 $\pm$ 0.52 & 95.89 $\pm$ 0.54 \\
Bagging-PU & 94.55 $\pm$ 0.39 & 94.88 $\pm$ 0.37 & 92.74 $\pm$ 0.62 & 94.20 $\pm$ 0.77 & 97.27 $\pm$ 0.77 & 97.14 $\pm$ 0.89 & {\ul{97.47 $\pm$ 0.44}} & {\ul{97.75 $\pm$ 0.39}} & 97.02 $\pm$ 0.15 & {\ul{97.38 $\pm$ 0.14}} \\
NESS & 95.15 $\pm$ 0.24 & 95.01 $\pm$ 0.23 & \textbf{96.29 $\pm$ 0.20} & {\ul{96.16 $\pm$ 0.27}} & 96.86 $\pm$ 0.12 & 96.83 $\pm$ 0.09 & 95.88 $\pm$ 0.46 & 96.74 $\pm$ 0.34 & 95.75 $\pm$ 0.21 & 96.17 $\pm$ 0.14 \\ \midrule
PULL (proposed) & \textbf{96.59 $\pm$ 0.19} & \textbf{96.83 $\pm$ 0.18} & {\ul{96.06 $\pm$ 0.34}} & \textbf{96.25 $\pm$ 0.35} & \textbf{97.87 $\pm$ 0.33} & \textbf{97.83 $\pm$ 0.33} & \textbf{98.31 $\pm$ 0.20} & \textbf{98.36 $\pm$ 0.22} & \textbf{97.41 $\pm$ 0.11} & \textbf{97.67 $\pm$ 0.09} \\ \bottomrule
\end{tabular}
}

\label{table:performance}
\end{table*}

\begin{table*}[t]
\centering
\caption{
	The performance improvement of baselines with the integration of \method.
	The best performance is indicated in bold.
	Note that \method enhances the performance of baseline models.
}

\scalebox{0.62}{
\begin{tabular}{@{}lcccccccccc@{}}
\toprule
\multicolumn{11}{c}{Missing ratio $r_m$ = 0.1} \\ \midrule
\multicolumn{1}{l|}{\multirow{2}{*}{\textbf{Model}}} & \multicolumn{2}{c}{\textbf{PubMed}} & \multicolumn{2}{c}{\textbf{Cora-full}} & \multicolumn{2}{c}{\textbf{Chameleon}} & \multicolumn{2}{c}{\textbf{Crocodile}} & \multicolumn{2}{c}{\textbf{Facebook}} \\
\multicolumn{1}{l|}{} & AUROC & AUPRC & AUROC & AUPRC & AUROC & AUPRC & AUROC & AUPRC & AUROC & AUPRC \\ \midrule
\multicolumn{1}{l|}{GAE} & 96.35 $\pm$ 0.17 & 96.46 $\pm$ 0.15 & 95.51 $\pm$ 0.31 & 95.52 $\pm$ 0.32 & 96.88 $\pm$ 0.48 & 96.80 $\pm$ 0.54 & 96.67 $\pm$ 0.70 & 96.78 $\pm$ 1.17 & 97.00 $\pm$ 0.17 & 97.27 $\pm$ 0.13 \\
\multicolumn{1}{l|}{GAE+PULL} & \textbf{96.64 $\pm$ 0.22} & \textbf{96.86 $\pm$ 0.21} & \textbf{96.00 $\pm$ 0.48} & \textbf{96.12 $\pm$ 0.58} & \textbf{98.04 $\pm$ 0.18} & \textbf{98.05 $\pm$ 0.15} & \textbf{98.22 $\pm$ 0.18} & \textbf{98.31 $\pm$ 0.17} & \textbf{97.41 $\pm$ 0.14} & \textbf{97.67 $\pm$ 0.11} \\ \midrule
\multicolumn{1}{l|}{VGAE} & 94.61 $\pm$ 1.01 & 94.74 $\pm$ 1.00 & 92.37 $\pm$ 3.89 & 92.40 $\pm$ 3.68 & 96.32 $\pm$ 0.27 & 96.20 $\pm$ 0.26 & 95.29 $\pm$ 0.40 & 95.45 $\pm$ 0.82 & 96.29 $\pm$ 0.27 & 96.49 $\pm$ 0.28 \\
\multicolumn{1}{l|}{VGAE+PULL} & \textbf{95.81 $\pm$ 0.51} & \textbf{95.92 $\pm$ 0.50} & \textbf{93.75 $\pm$ 3.17} & \textbf{93.85 $\pm$ 3.01} & \textbf{97.24 $\pm$ 0.47} & \textbf{97.29 $\pm$ 0.49} & \textbf{97.17 $\pm$ 0.73} & \textbf{97.33 $\pm$ 0.64} & \textbf{96.56 $\pm$ 0.25} & \textbf{96.72 $\pm$ 0.26} \\ \midrule
\multicolumn{1}{l|}{VGNAE} & 95.99 $\pm$ 0.63 & 95.99 $\pm$ 0.55 & 95.42 $\pm$ 1.23 & 95.14 $\pm$ 1.34 & 97.46 $\pm$ 0.43 & 97.17 $\pm$ 0.53 & 96.34 $\pm$ 0.76 & 95.29 $\pm$ 1.87 & 95.79 $\pm$ 0.52 & 95.89 $\pm$ 0.54 \\
\multicolumn{1}{l|}{VGNAE+PULL} & \textbf{96.22 $\pm$ 0.37} & \textbf{96.23 $\pm$ 0.37} & \textbf{96.02 $\pm$ 0.64} & \textbf{95.75 $\pm$ 0.70} & \textbf{97.75 $\pm$ 0.36} & \textbf{97.46 $\pm$ 0.43} & \textbf{97.23 $\pm$ 0.73} & \textbf{96.91 $\pm$ 0.79} & \textbf{95.83 $\pm$ 0.46} & \textbf{95.91 $\pm$ 0.44} \\ \bottomrule
\end{tabular}
}

\label{table:baseline-improvement}
\end{table*}

We conduct experiments to answer the following questions:

\begin{enumerate*}
	\item[\textbf{Q1.}] \textbf{Link prediction performance.}
		How accurate is \method compared to the baselines for predicting links in edge-incomplete graphs?
	\item[\textbf{Q2.}] \textbf{Applying \method to other baselines.}
		Does applying \method to other methods improve the accuracy?
	\item[\textbf{Q3.}] \textbf{Effect of iterative learning.}
		How does the accuracy of \method change over iterations?
	\item[\textbf{Q4.}] \textbf{Effect of additional loss.}
		How does the additional loss term $\mc{L}_C$ of \method contribute to the performance?
	\item[\textbf{Q5.}] \textbf{Scalability.}
		How does the running time of \method change as the graph size grows?
\end{enumerate*}

\subsection{Experimental Settings}
\label{ssec:exp-settings}

\paragraph{\textbf{Datasets.}}
We use seven real-world datasets from various domains which are summarized in Table~4 (in Appendix).
PubMed and Cora-full are citation networks where nodes correspond to scientific publications and edges signify citation relationships between the papers.
Each node has binary bag-of-words features.
Chameleon and Crocodile are Wikipedia networks, with nodes representing web pages and edges representing hyperlinks between them.
Node features include keywords or informative nouns extracted from the pages.
Facebook is a social network where nodes represent users, and edges indicate mutual likes.
Node features represent user-specific information such as age and gender.

\paragraph{\textbf{Baselines.}} We compare \method with previous approaches for link prediction.
GCN+CE, GAT+CE, and SAGE+CE use GCN~\shortcite{KipfW17}, GAT~\shortcite{Petar17}, and GraphSAGE~\shortcite{HamiltonYL17} for computing linking probabilities, respectively.
Cross entropy (CE) loss is used and $|\mc{E}_\mc{P}|$ non-edges are sampled randomly from $\mc{E}_\mc{U}$ every epoch to balance the ratio between edge and non-edge examples.
GAE and VGAE~\shortcite{KipfW16a} are autoencoder-based frameworks for link prediction.
ARGA and ARGVA~\shortcite{PanHLJYZ18} respectively improve the performance of GAE and VGAE by introducing adversarial training strategy.
VGNAE~\shortcite{AhnK21} utilizes L2-normalization to obtain better node embeddings for isolated nodes.
Bagging-PU~\shortcite{gan2022positive} classifies node pairs into observed and unobserved, and approximates the linking probabilities using the ratio of observed links.
NESS~\shortcite{abs-2303-08958} is the current state-of-the-art link prediction method.
It utilizes multiple static subgraphs during training, and aggregates the learned node embeddings to obtain the linking probabilities at test time.
More implementation details are described in the Appendix section.

\paragraph{\textbf{Evaluation and Settings.}}
We evaluate the performance of \method and the baselines in classifying edges and non-edges correctly.
We use AUROC (AUC) and AUPRC (AP) scores as the evaluation metric~\cite{KipfW17}.
The models are trained using graphs that lack some edges, while preserving all node attributes.
The validation and test sets consist of the missing edges and an equal number of randomly sampled non-edges.
We vary the ratio $r_m$ of test missing edges in \{0.1, 0.2\}.
The ratio of valid missing edges are set to 0.1 through the experiments.
The validation set is employed for early stopping with patience 20, and the number of maximum epochs is set to 2,000.
The epochs are not halted until 500 since the accuracy oscillates in the earlier epochs.
For \method, we set the number of inner loops as 200, and the maximum number of iterations as 10.
The iterations stop if the current validation AUROC is smaller than that of the previous iteration.
We use Adam optimizer with a learning rate of 0.01, and set the numbers of layers and hidden dimensions as 2 and 16, respectively, following the original GCN paper~\shortcite{KipfW17} for fair comparison between the methods.
For the hyperparameters of the baselines, we use the default settings described in their papers.
We repeat the experiments ten times with different random seeds and present the results in terms of both average and standard deviation.

\begin{figure*}[t]
	\centering
	\includegraphics[width=0.9\textwidth]{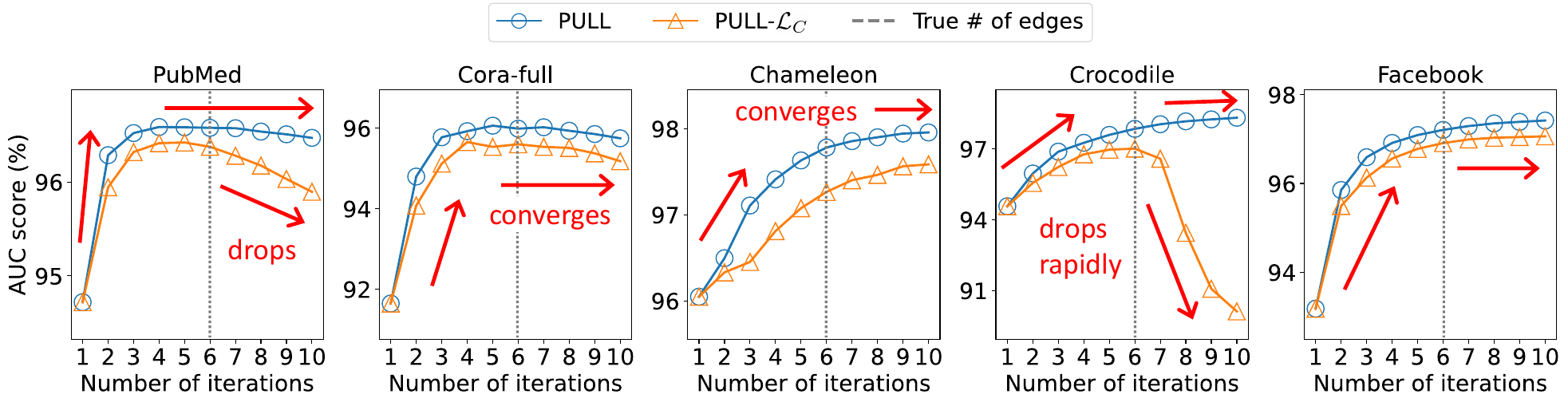}
	\caption{
		AUC score of \method and \method-$\mc{L}_C$ through the iterations.
		\method-$\mc{L}_C$ represents \method without $\mc{L}_C$.
		The dashed gray lines denote the ground-truth numbers of edges.
		The accuracy of \method increases in early iterations, and converges or slightly increases as the number $K$ of sampled edges exceeds the ground-truth one.
		This shows that \method improves the quality of the expected graph with each iteration.
		Moreover, \method consistently shows superior performance than \method-$\mc{L}_C$.
		In PubMed and Crocodile, the accuracy of \method-$\mc{L}_C$ drops rapidly after exceeding the dashed gray lines.
		This demonstrates that $\mc{L}_C$ protects \method from performance degradation when the expected graph structure has more edges than the actual graph.
	}
\label{figure:iteration}
\end{figure*}

\subsection{Link Prediction Performance (Q1)}
\label{ssec:exp-performance}

We compare the link prediction performance of \method with the baselines in Table~\ref{table:performance} \blue{while setting the ratio $r_m$ of test missing edges as 0.1.
We also report the performance with $r_m = 0.2$ in Appendix.}
Note that \method achieves the highest AUROC and AUPRC scores among the methods in most of the cases.
This highlights the significance of considering the uncertainty in the given edge-incomplete graph to enhance the prediction performance.
It is also noteworthy that GCN+CE model, which propagates information through the edge-incomplete graph using GCN, shows consistently lower performance than \method.
This shows that the propagation of \method with the expected graph structure effectively prevents $f$ from overtrusting the given graph structure, whereas the propagation of GCN+CE with the given graph leads to overfitting.

\subsection{Applying \method to Other Methods (Q2)}
\label{ssec:apply-method}

\method can be applied to other GCN-based methods including GAE, VGAE, and VGNAE that use GCN-based propagation scheme.
To show that \method improves the performance of existing models, we conduct experiments by applying \method to them \blue{while setting the ratio $r_m$ of test missing edges as 0.1.
We also report the experimental results with $r_m = 0.2$ in Appendix.}
It is not easy for \method to be directly applied to other baselines such as GAT, GraphSAGE, ARGA, and ARGVA.
This is because they use different propagation schemes instead of GCN, posing a challenge for \method in propagating information through the expected graph structure during training.
Table~\ref{table:baseline-improvement} summarizes the results.
Note that \method improves the accuracy of the baselines in most of the cases, demonstrating its effectiveness across a range of different models.


\begin{figure}[t]
\centering
	\includegraphics[width=0.33\textwidth]{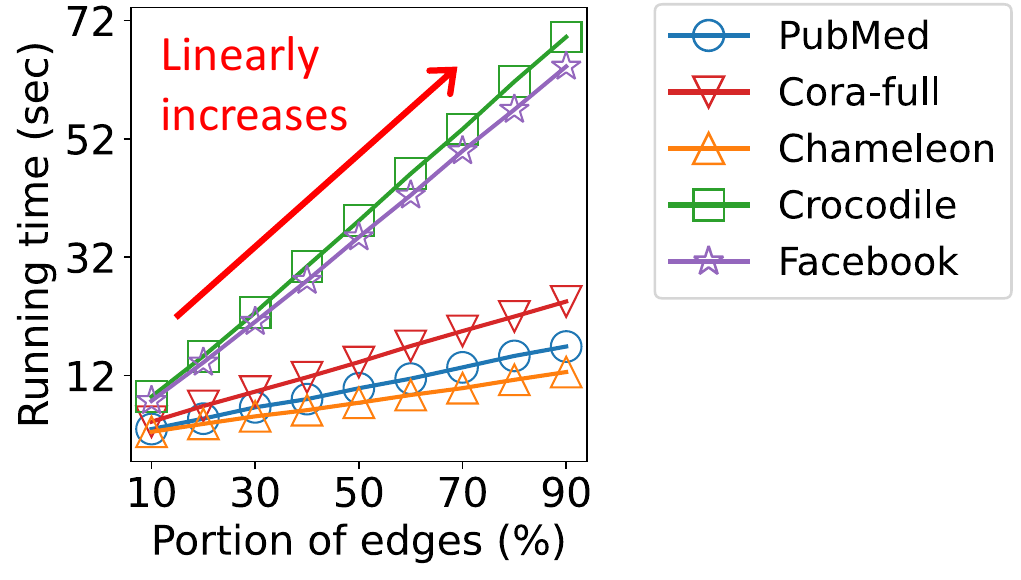}
	\caption{
		The running time of \method on sampled subgraphs.
		The time increases linearly with the number of edges.
	}
\label{figure:scalability}
\end{figure}

\subsection{Effect of Iterative Learning (Q3)}
\label{ssec:exp-iterative}

For each iteration, \method computes the expected graph $\mc{\bar{G}}$ utilizing the trained link predictor $f$ from the previous iteration.
Then \method retrains $f$ with $\mc{\bar{G}}$ to prevent the link predictor from blindly trusting the given graph.
We study how the accuracy of \method evolves as the iteration proceeds in Figure~\ref{figure:iteration}.
\method increases the number $K$ of selected edges for the approximation of $\mc{\bar{G}}$ as the iteration progresses.
The dashed gray lines indicate the points at which $K$ becomes equal to the ground-truth number of edges for each dataset.

The AUC score of \method in Figure~\ref{figure:iteration} increases through the iterations, reaching its best performance when the number $K$ of selected edges closely matches the ground-truth one.
This shows that \method enhances the quality of the expected graph as the iterations progress, and eventually makes accurate predictions of the true graph structure.
In Crocodile and Facebook, the prediction accuracy increases even with larger number of edges than the ground-truth one.
This observation indicates that the ground-truth graph structures of Crocodile and Facebook inherently contain missing links.

\subsection{Effect of Additional Loss (Q4)}
\label{ssec:exp-loss}

We study the effect of the additional loss term $\mc{L}_C$ of \method on the link prediction performance in Figure~\ref{figure:iteration}.
\method-$\mc{L}_C$ represents \method trained by minimizing only $\mc{L}_E'$.
Note that \method-$\mc{L}_C$ consistently shows lower accuracy than \method, showing the importance of $\mc{L}_C$.
In PubMed and Crocodile, the AUC scores of \method-$\mc{L}_C$ drop rapidly after the fifth iteration, where the number $K$ of selected edges exceeds the ground-truth one.
This indicates that $\mc{L}_C$ effectively safeguards \method against performance degradation when the expected graph contains more edges than the actual one.

\subsection{Scalability (Q5)}
\label{ssec:exp-scalability}

We study the running time of \method on subgraphs with different sizes to show its scalability to large graphs in Figure~\ref{figure:scalability}.
We randomly sample edges from the original graphs with various portions $r_p \in \{0.1, ..., 0.9 \}$.
Thus, each induced subgraph has $r_p |\mc{E}|$ edges where $\mc{E}$ is the set of edges of the original graph.
Figure~\ref{figure:scalability} shows that the runtime of \method exhibits a linear increase with the number of edges, showing its scalability to large graphs.
This is because \method effectively approximates the expected graph $\bar{\mc{G}}$ with $|\mc{V}|^2$ weighted edges by a graph $\bar{\mc{G}}'$ with $(1+0.05t)|\mc{E}_\mc{P}|$ edges where $\mc{V}$ and $\mc{E}_\mc{P}$ are sets of nodes and observed edges, respectively, and $t$ is the number of iterations. 

\section{Conclusion}
\label{sec:conclusion}

We propose \method, an accurate method for link prediction in edge-incomplete graphs.
\method addresses the limitation of previous approaches, which is their heavy reliance on the observed graph, by iteratively predicting the true graph structure.
\method proposes latent variables for the unconnected edges in a graph, and propagates information through the expected graph structure.
\method then uses the expected linking probabilities of unconnected edges as their pseudo labels for training a link predictor.
Extensive experiments on real-world datasets show that \method shows superior performance than the baselines.
%
%

\blue{
\section*{Acknowledgements}
This work was supported by Institute of Information \& communications Technology Planning \& Evaluation (IITP) grant funded by the Korea government (MSIT) [No.2022-0-00641, XVoice: Multi-Modal Voice Meta Learning], [No.RS-2020-II200894, Flexible and Efficient Model Compression Method for Various Applications and Environments], [No.RS-2021-II211343, Artificial Intelligence Graduate School Program (Seoul National University)], [No.RS-2024-00509257, Global AI Frontier Lab], and [No.RS-2021-II212068, Artificial Intelligence Innovation Hub (Artificial Intelligence Institute, Seoul National University)]. 
The Institute of Engineering Research at Seoul National University and the ICT at Seoul National University provided research facilities for this work.
The code and datasets are available at \url{https://github.com/snudatalab/PULL}.
U Kang is the corresponding author.
}

\bibliography{aaai25}

\begin{thebibliography}{43}
\providecommand{\natexlab}[1]{#1}

\bibitem[{Afoudi, Lazaar, and Hmaidi(2023)}]{afoudi2023enhanced}
Afoudi, Y.; Lazaar, M.; and Hmaidi, S. 2023.
\newblock An enhanced recommender system based on heterogeneous graph link
  prediction.
\newblock \emph{Engineering Applications of Artificial Intelligence}, 124:
  106553.

\bibitem[{Ahn and Kim(2021)}]{AhnK21}
Ahn, S.; and Kim, M.~H. 2021.
\newblock Variational Graph Normalized AutoEncoders.
\newblock In \emph{Conference on Information and Knowledge Management (CIKM)},
  2827--2831. {ACM}.

\bibitem[{Barabási and Albert(1999)}]{Albert99}
Barabási, A.-L.; and Albert, R. 1999.
\newblock Emergence of Scaling in Random Networks.
\newblock \emph{Science}.

\bibitem[{Chamberlain et~al.(2023)Chamberlain, Shirobokov, Rossi, Frasca,
  Markovich, Hammerla, Bronstein, and Hansmire}]{ChamberlainSRFM23}
Chamberlain, B.~P.; Shirobokov, S.; Rossi, E.; Frasca, F.; Markovich, T.;
  Hammerla, N.~Y.; Bronstein, M.~M.; and Hansmire, M. 2023.
\newblock Graph Neural Networks for Link Prediction with Subgraph Sketching.
\newblock In \emph{International Conference on Learning Representations
  (ICLR)}.

\bibitem[{Daud et~al.(2020)Daud, Hamid, Saadoon, Sahran, and
  Anuar}]{DaudHSSA20}
Daud, N.~N.; Hamid, S. H.~A.; Saadoon, M.; Sahran, F.; and Anuar, N.~B. 2020.
\newblock Applications of link prediction in social networks: {A} review.
\newblock \emph{Journal of Network and Computer Applications}, 166: 102716.

\bibitem[{Dempster, Laird, and Rubin(1977)}]{dempster1977maximum}
Dempster, A.~P.; Laird, N.~M.; and Rubin, D.~B. 1977.
\newblock Maximum likelihood from incomplete data via the EM algorithm.
\newblock \emph{Journal of the royal statistical society: series B
  (methodological)}.

\bibitem[{Fey and Lenssen(2019)}]{Fey/Lenssen/2019}
Fey, M.; and Lenssen, J.~E. 2019.
\newblock Fast Graph Representation Learning with {PyTorch Geometric}.
\newblock In \emph{International Conference on Learning Representations (ICLR)
  Workshop on Representation Learning on Graphs and Manifolds}.

\bibitem[{Gan, Alshahrani, and Liu(2022)}]{gan2022positive}
Gan, S.; Alshahrani, M.; and Liu, S. 2022.
\newblock Positive-Unlabeled Learning for Network Link Prediction.
\newblock \emph{Mathematics}, 10(18): 3345.

\bibitem[{Hamilton, Ying, and Leskovec(2017)}]{HamiltonYL17}
Hamilton, W.~L.; Ying, Z.; and Leskovec, J. 2017.
\newblock Inductive Representation Learning on Large Graphs.
\newblock In \emph{{Conference on Neural Information Processing Systems
  (NeurIPS)}}, 1024--1034.

\bibitem[{Hao(2021)}]{hao2021learning}
Hao, Y. 2021.
\newblock \emph{Learning node embedding from graph structure and node
  attributes}.
\newblock Ph.D. thesis, UNSW Sydney.

\bibitem[{Hinton and Salakhutdinov(2006)}]{hinton2006reducing}
Hinton, G.~E.; and Salakhutdinov, R.~R. 2006.
\newblock Reducing the dimensionality of data with neural networks.
\newblock \emph{science}, 313(5786): 504--507.

\bibitem[{Jung, Park, and Kang(2020)}]{JungPK20}
Jung, J.; Park, H.; and Kang, U. 2020.
\newblock BalanSiNG: Fast and Scalable Generation of Realistic Signed Networks.
\newblock In \emph{{EDBT}}.

\bibitem[{Kim, Jung, and Kang(2021)}]{kim2021compressing}
Kim, J.; Jung, J.; and Kang, U. 2021.
\newblock Compressing deep graph convolution network with multi-staged
  knowledge distillation.
\newblock \emph{Plos one}.

\bibitem[{Kipf and Welling(2016)}]{KipfW16a}
Kipf, T.~N.; and Welling, M. 2016.
\newblock Variational Graph Auto-Encoders.
\newblock \emph{CoRR}, abs/1611.07308.

\bibitem[{Kipf and Welling(2017)}]{KipfW17}
Kipf, T.~N.; and Welling, M. 2017.
\newblock Semi-Supervised Classification with Graph Convolutional Networks.
\newblock In \emph{{International Conference on Learning Representations
  (ICLR)} (Poster)}. OpenReview.net.

\bibitem[{Kiryo et~al.(2017)Kiryo, Niu, du~Plessis, and Sugiyama}]{KiryoNPS17}
Kiryo, R.; Niu, G.; du~Plessis, M.~C.; and Sugiyama, M. 2017.
\newblock Positive-Unlabeled Learning with Non-Negative Risk Estimator.
\newblock In \emph{{Conference on Neural Information Processing Systems
  (NeurIPS)}}, 1675--1685.

\bibitem[{Koller and Friedman(2009)}]{koller2009probabilistic}
Koller, D.; and Friedman, N. 2009.
\newblock \emph{Probabilistic graphical models: principles and techniques}.
\newblock MIT press.

\bibitem[{Li et~al.(2023)Li, Lu, Wu, and Ling}]{LiLWL23}
Li, J.; Lu, G.; Wu, Z.; and Ling, F. 2023.
\newblock Multi-view representation model based on graph autoencoder.
\newblock \emph{Information Sciences}, 632: 439--453.

\bibitem[{Liu et~al.(2019)Liu, Kou, Yan, and Qi}]{LiuKYQ19}
Liu, H.; Kou, H.; Yan, C.; and Qi, L. 2019.
\newblock Link prediction in paper citation network to construct paper
  correlation graph.
\newblock \emph{{European Association For Signal Processing (EURASIP)} Journal
  on Wireless Communications and Networking}, 2019: 233.

\bibitem[{Long et~al.(2022)Long, Wu, Liu, Fang, Kwoh, Chen, Luo, and
  Li}]{LongWLFKCLL22}
Long, Y.; Wu, M.; Liu, Y.; Fang, Y.; Kwoh, C.~K.; Chen, J.; Luo, J.; and Li, X.
  2022.
\newblock Pre-training graph neural networks for link prediction in biomedical
  networks.
\newblock \emph{Bioinformatics}, 38(8): 2254--2262.

\bibitem[{Luo et~al.(2021)Luo, Zhao, Chen, Qiao, Du, Zhang, Wu, Cai, He,
  Rajmohan, and Lin}]{LuoZCQDZWCHRL21}
Luo, C.; Zhao, P.; Chen, C.; Qiao, B.; Du, C.; Zhang, H.; Wu, W.; Cai, S.; He,
  B.; Rajmohan, S.; and Lin, Q. 2021.
\newblock {PULNS:} Positive-Unlabeled Learning with Effective Negative Sample
  Selector.
\newblock In \emph{{Association for the Advancement of Artificial Intelligence
  (AAAI)}}, 8784--8792. {AAAI} Press.

\bibitem[{Ma and Zhang(2017)}]{MaZ17}
Ma, S.; and Zhang, R. 2017.
\newblock In \emph{2017 IEEE International Conference on Multimedia \& Expo
  Workshops (ICMEW)}, 537--542. IEEE.

\bibitem[{Murphy(2012)}]{murphy2012machine}
Murphy, K.~P. 2012.
\newblock \emph{Machine learning: a probabilistic perspective}.
\newblock MIT press.

\bibitem[{Pan et~al.(2018)Pan, Hu, Long, Jiang, Yao, and Zhang}]{PanHLJYZ18}
Pan, S.; Hu, R.; Long, G.; Jiang, J.; Yao, L.; and Zhang, C. 2018.
\newblock Adversarially Regularized Graph Autoencoder for Graph Embedding.
\newblock In \emph{{International Joint Conference on Artificial Intelligence
  (IJCAI)}}, 2609--2615. ijcai.org.

\bibitem[{Shibata, Kajikawa, and Sakata(2012)}]{ShibataKS12}
Shibata, N.; Kajikawa, Y.; and Sakata, I. 2012.
\newblock Link prediction in citation networks.
\newblock \emph{Journal of the Association for Information Science and
  Technology}, 63(1): 78--85.

\bibitem[{Tan et~al.(2023)Tan, Zhang, Liu, Zha, Li, Chen, Choi, and
  Hu}]{TanZLZL0CH23}
Tan, Q.; Zhang, X.; Liu, N.; Zha, D.; Li, L.; Chen, R.; Choi, S.; and Hu, X.
  2023.
\newblock Bring Your Own View: Graph Neural Networks for Link Prediction with
  Personalized Subgraph Selection.
\newblock In \emph{{International Conference on Web Search and Data Mining
  (WSDM)}}.

\bibitem[{Ucar(2023)}]{abs-2303-08958}
Ucar, T. 2023.
\newblock {NESS:} Learning Node Embeddings from Static SubGraphs.
\newblock \emph{CoRR}.

\bibitem[{Velickovic et~al.(2017)Velickovic, Cucurull, Casanova, Romero,
  Li{\`{o}}, and Bengio}]{Petar17}
Velickovic, P.; Cucurull, G.; Casanova, A.; Romero, A.; Li{\`{o}}, P.; and
  Bengio, Y. 2017.
\newblock Graph Attention Networks.

\bibitem[{Wang, Cui, and Zhu(2016)}]{wang2016structural}
Wang, D.; Cui, P.; and Zhu, W. 2016.
\newblock Structural deep network embedding.
\newblock In \emph{SIGKDD International Conference on Knowledge Discovery and
  Data Mining}, 1225--1234.

\bibitem[{Wang et~al.(2015)Wang, Xu, Wu, and
  Zhou}]{DBLP:journals/chinaf/WangXWZ15}
Wang, P.; Xu, B.; Wu, Y.; and Zhou, X. 2015.
\newblock Link prediction in social networks: the state-of-the-art.
\newblock \emph{Science China Information Sciences}, 58(1): 1--38.

\bibitem[{Wu et~al.(2019)Wu, Pan, Du, Tsang, Zhu, and Du}]{WuPDTZD19}
Wu, M.; Pan, S.; Du, L.; Tsang, I.~W.; Zhu, X.; and Du, B. 2019.
\newblock Long-short Distance Aggregation Networks for Positive Unlabeled Graph
  Learning.
\newblock In \emph{{Conference on Information and Knowledge Management
  (CIKM)}}, 2157--2160. {ACM}.

\bibitem[{Yang et~al.(2023)Yang, Zhang, Yao, and Kwok}]{YangZYK23}
Yang, H.; Zhang, Y.; Yao, Q.; and Kwok, J.~T. 2023.
\newblock Positive-Unlabeled Node Classification with Structure-aware Graph
  Learning.
\newblock In \emph{{Conference on Information and Knowledge Management
  (CIKM)}}, 4390--4394. {ACM}.

\bibitem[{Yoo et~al.(2022{\natexlab{a}})Yoo, Jeon, Jung, and Kang}]{YooJJK22}
Yoo, J.; Jeon, H.; Jung, J.; and Kang, U. 2022{\natexlab{a}}.
\newblock Accurate Node Feature Estimation with Structured Variational Graph
  Autoencoder.
\newblock In \emph{{KDD}}.

\bibitem[{Yoo et~al.(2021)Yoo, Kim, Yoon, Kim, Jang, and Kang}]{YooKYKJK21}
Yoo, J.; Kim, J.; Yoon, H.; Kim, G.; Jang, C.; and Kang, U. 2021.
\newblock Accurate Graph-Based {PU} Learning without Class Prior.
\newblock In \emph{{International Conference on Data Mining (ICDM)}}, 827--836.
  {IEEE}.

\bibitem[{Yoo et~al.(2022{\natexlab{b}})Yoo, Kim, Yoon, Kim, Jang, and
  Kang}]{YooKYKJK22}
Yoo, J.; Kim, J.; Yoon, H.; Kim, G.; Jang, C.; and Kang, U. 2022{\natexlab{b}}.
\newblock Graph-based {PU} learning for binary and multiclass classification
  without class prior.
\newblock \emph{Knowledge and Information Systems}.

\bibitem[{Yoo, Shim, and Kang(2022)}]{YooSK22}
Yoo, J.; Shim, S.; and Kang, U. 2022.
\newblock Model-Agnostic Augmentation for Accurate Graph Classification.
\newblock In \emph{{WWW}}.

\bibitem[{You et~al.(2021)You, Gomes-Selman, Ying, and
  Leskovec}]{you2021identity}
You, J.; Gomes-Selman, J.~M.; Ying, R.; and Leskovec, J. 2021.
\newblock Identity-aware graph neural networks.
\newblock In \emph{Association for the Advancement of Artificial Intelligence
  (AAAI)}, volume~35, 10737--10745.

\bibitem[{Yun et~al.(2022)Yun, Kim, Lee, Kang, and Kim}]{abs-2206-04216}
Yun, S.; Kim, S.; Lee, J.; Kang, J.; and Kim, H.~J. 2022.
\newblock Neo-GNNs: Neighborhood Overlap-aware Graph Neural Networks for Link
  Prediction.
\newblock \emph{CoRR}, abs/2206.04216.

\bibitem[{Zhang et~al.(2019)Zhang, Ren, Liu, Yang, and Gong}]{ZhangRL0019}
Zhang, C.; Ren, D.; Liu, T.; Yang, J.; and Gong, C. 2019.
\newblock Positive and Unlabeled Learning with Label Disambiguation.
\newblock In \emph{{International Joint Conference on Artificial Intelligence
  (IJCAI)}}, 4250--4256. ijcai.org.

\bibitem[{Zhang and Chen(2018)}]{ZhangC18}
Zhang, M.; and Chen, Y. 2018.
\newblock Link Prediction Based on Graph Neural Networks.
\newblock In \emph{Conference on Neural Information Processing Systems
  (NeurIPS)}, 5171--5181.

\bibitem[{Zhang et~al.(2021)Zhang, Li, Xia, Wang, and Jin}]{ZhangLXWJ21}
Zhang, M.; Li, P.; Xia, Y.; Wang, K.; and Jin, L. 2021.
\newblock Labeling Trick: {A} Theory of Using Graph Neural Networks for
  Multi-Node Representation Learning.
\newblock In \emph{Conference on Neural Information Processing Systems
  (NeurIPS)}, 9061--9073.

\bibitem[{Zhao et~al.(2022)Zhao, Xu, Jiang, Wen, and Huang}]{ZhaoXJWH22}
Zhao, Y.; Xu, Q.; Jiang, Y.; Wen, P.; and Huang, Q. 2022.
\newblock Dist-PU: Positive-Unlabeled Learning from a Label Distribution
  Perspective.
\newblock In \emph{{Computer Vision and Pattern Recognition Conference
  (CVPR)}}, 14441--14450. {IEEE}.

\bibitem[{Zhu et~al.(2021)Zhu, Zhang, Xhonneux, and Tang}]{ZhuZXT21}
Zhu, Z.; Zhang, Z.; Xhonneux, L. A.~C.; and Tang, J. 2021.
\newblock Neural Bellman-Ford Networks: {A} General Graph Neural Network
  Framework for Link Prediction.
\newblock In \emph{Conference on Neural Information Processing Systems
  (NeurIPS)}, 29476--29490.

\end{thebibliography}

\clearpage
\pagebreak
\onecolumn
\appendixtitlepage
\setcounter{table}{2}
\setcounter{equation}{9}


\begin{abstract}
We provide additional information including the problem definition, symbols, lemma, proofs, detailed experimental settings, and further experimental results.
\end{abstract}

\section{Problem Definition}
\label{sec:appendix-problem}

\begin{problem}[Link Prediction in Edge-incomplete Graphs]
We have an edge-incomplete graph $\mc{G}_\mc{P}=(\mc{V}, \mc{E}_\mc{P})$, along with a feature matrix $\mb{X} \in \mathbb{R}^{|\mc{V}| \times d}$ where $\mc{V}$ and $\mc{E}_\mc{P}$ are the sets of nodes and observed edges, respectively, and $d$ is the number of features for each node.
The remaining unconnected node pairs are denoted as a set $\mc{E}_\mc{U}$.
The objective of link prediction in edge-incomplete graphs is to train a link predictor $f$ that accurately identifies the connected node pairs within $\mc{E}_\mc{U}$.
\label{problem:link-prediction}
\end{problem}

\section{Lemma}
\label{sec:appendix-lemma}

\begin{lemma}
	We are given a graph $\mc{G}_\mc{P}$ and its corresponding line graph $L(\mc{G}_\mc{P})=(\mc{V}_L, \mc{E}_L)$ where $\mc{V}_L$ and $\mc{E}_L$ are sets of nodes and edges in $L(\mc{G}_\mc{P})$, respectively.
	We are also given node potentials $\phi_{ij}(z_{ij} \mid \theta)$ of nodes $ij$ in graph $L(\mc{G}_\mc{P})$.
	Then the equation $\sum_{\mb{z}}\prod_{ij \in \mc{V}_L} \phi_{ij}(z_{ij} \mid \theta) = 1$ holds for $\sum_{z_{ij}} \phi_{ij}(z_{ij} \mid \theta) = 1$.
\label{appendix-lemma1}
\end{lemma}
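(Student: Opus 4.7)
The plan is to exploit the product structure of the joint expression together with the factorization hypothesis $\sum_{z_{ij}} \phi_{ij}(z_{ij}\mid\theta)=1$. Because $\mb{z}$ enumerates assignments over every node $ij\in\mc{V}_L$ and each node variable $z_{ij}$ takes values in the finite set $\{0,1\}$, the outer sum $\sum_{\mb{z}}$ is just an iterated finite sum $\sum_{z_{i_1 j_1}} \sum_{z_{i_2 j_2}} \cdots \sum_{z_{i_N j_N}}$, where $N = |\mc{V}_L|$ enumerates the nodes of the line graph.

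The crucial observation is that the factor $\phi_{ij}(z_{ij}\mid\theta)$ depends \emph{only} on its own coordinate $z_{ij}$, not on any other $z_{kl}$. Hence, when I slide the innermost sum $\sum_{z_{i_N j_N}}$ through the product, every factor with index $ij\neq i_N j_N$ is constant with respect to $z_{i_N j_N}$ and can be pulled outside; this is just the distributive law applied to a finite sum. Iterating this move from the innermost variable outward gives
\begin{equation*}
\sum_{\mb{z}} \prod_{ij\in\mc{V}_L} \phi_{ij}(z_{ij}\mid\theta)
= \prod_{ij\in\mc{V}_L} \Bigl(\sum_{z_{ij}} \phi_{ij}(z_{ij}\mid\theta)\Bigr).
\end{equation*}

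Invoking the stated hypothesis $\sum_{z_{ij}} \phi_{ij}(z_{ij}\mid\theta) = 1$ term-by-term then collapses the product on the right to $\prod_{ij\in\mc{V}_L} 1 = 1$, which is the claim.

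There is essentially no obstacle: the argument is purely algebraic and the finiteness of $\mc{V}_L$ together with the binary domain of each $z_{ij}$ makes every interchange of sum and product automatic. The only bookkeeping to be careful about is making sure the swap step is presented cleanly, for instance by an explicit induction on $|\mc{V}_L|$ (base case $|\mc{V}_L|=1$ is the hypothesis itself; the inductive step peels off one variable and applies the distributive law). This lemma is then used twice in the main text: once to reduce $F=c^{|\mc{E}_L|}$ in Equation~(\ref{equation:mrf-1}), and once to collapse $\sum_{\mb{z}\mid z_{ij}=1}\prod_{e_{kl}\in\mc{E}_\mc{U}\setminus\{e_{ij}\}}\phi_{kl}(z_{kl}\mid\theta)=1$ in Equation~(\ref{eq:expected-graph-structure-2}), both of which are instances of exactly the same factorization identity restricted to appropriate subsets of variables.
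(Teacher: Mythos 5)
Your proposal is correct and follows essentially the same route as the paper's proof: the paper likewise peels off one variable at a time, pulling each $\sum_{z_{ij}}\phi_{ij}(z_{ij}\mid\theta)=1$ out of the product via the distributive law until only the last factor remains. Your suggestion to formalize the peeling as an induction on $|\mc{V}_L|$ is a minor presentational refinement of the same argument.
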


\begin{proof}
Let $N = |\mc{V}_L|$, and $\mc{E}$ be the set of all observed edges and unconnected edges in $\mc{G}_\mc{P}$.
Then the sum of $\prod_{ij \in \mc{V}_L} \phi_{ij}(z_{ij}|\theta)$ for all possible ${\mb{z}}$ is computed as follows:
\begin{equation*}
\small
\begin{split}
		& \sum_{\mb{z}} \prod_{ij \in \mc{V}_L} \phi_{ij}(z_{ij} \mid \theta)
		= \sum_{\mb{z}} \prod_{e_{ij} \in \mc{E}} \phi_{ij}(z_{ij} \mid \theta)
		= \sum_{\mb{z} \setminus \{{z_{11}}\}} \prod_{e_{ij} \in \mc{E} \setminus \{e_{11}\}} \phi_{ij}(z_{ij} \mid \theta) \sum_{z_{11}} \phi_{11}(z_{11} \mid \theta) \\
		&= \sum_{\mb{z} \setminus \{z_{11},z_{12}\}} \prod_{e_{ij} \in \mc{E}\setminus \{e_{11},e_{12}\}} \phi_{ij}(z_{ij} \mid \theta) \sum_{z_{12}} \phi_{12}(z_{12}\mid \theta)
		= \dots = \sum_{z_{NN}} \phi_{NN}(z_{NN}\mid \theta) = 1
	\end{split}
	\label{eq:lemma1-proof-1}
\end{equation*}
which ends the proof.
Similarly, the equation $\sum_{\mb{z} \mid z_{ij}=1} \prod_{e_{kl} \in \mc{E}_\mc{U} \setminus \{e_{ij}\}} \phi_{kl}(z_{kl} \mid \theta) = 1$ holds for $\sum_{z_{ij}} \phi_{ij}(z_{ij} \mid \theta) = 1$.
\end{proof}

\section{Proofs for Theorems}
\subsection{Proof of Theorem 1}
\label{ssec:appendix-proof1}

\begin{proof}
	Using Equations~(2) and (8), the expected log likelihood $Q(\theta^\mathrm{new} \mid \theta)$ in Equation (7) is expressed as follows:
	\begin{equation}
	\small
	\begin{split}
		Q(\theta^\mathrm{new} \mid \theta)
		= \sum_{\mb{z}} p( \mb{z} \mid \mb{X}, \mc{E}_\mc{P}, \theta) \log p(\mc{E}_\mc{P}, \mb{z} \mid \mb{X}, \theta^\mathrm{new})
		&\approx \sum_\mb{z} p(\mb{z} \mid \mb{X}, \mc{E}_\mc{P}, \theta) \Bigl( \sum_{e_{ij} \in \mc{E}_\mc{P}} \log \hat{y}_{ij} + \sum_{e_{ij} \in \mc{E}_\mc{U}} \log \bigl(\hat{y}_{ij}(z_{ij})\bigr) \Bigr) \\
		&= \sum_{e_{ij} \in \mc{E}_\mc{P}} \log \hat{y}_{ij} + \sum_\mb{z} \prod_{e_{kl} \in \mc{E}_\mc{U}} \phi_{kl}(z_{kl} \mid \theta) \sum_{e_{ij} \in \mc{E}_\mc{U}} \log \bigl(\hat{y}_{ij}(z_{ij})\bigr)
	\end{split}
	\label{eq:theorem1-proof-1}
	\end{equation}
	where $\hat{y}_{ij}(z_{ij}) = z_{ij}\hat{y}_{ij} + (1-z_{ij})(1-\hat{y}_{ij})$.

	The last term in Equation~\eqref{eq:theorem1-proof-1} is expressed as follows:
	\begin{equation}
	\small
	\begin{split}
		\sum_\mb{z} \prod_{e_{kl} \in \mc{E}_\mc{U}} \phi_{kl}(z_{kl} \mid \theta) \sum_{e_{ij} \in \mc{E}_\mc{U}} \log \bigl(\hat{y}_{ij}(z_{ij})\bigr)
		&= \sum_\mb{z} \prod_{e_{kl} \in \mc{E}_\mc{U} \setminus \{e_{ij}\}} \phi_{kl}(z_{kl} \mid \theta) \sum_{e_{ij} \in \mc{E}_\mc{U}} \phi_{ij}(z_{ij} \mid \theta) \log \bigl(\hat{y}_{ij}(z_{ij})\bigr) \\
		&= \sum_{e_{ij} \in \mc{E}_\mc{U}} \bigl( \phi_{ij}(z_{ij}=1 \mid \theta) \log \hat{y}_{ij} + \phi_{ij}(z_{ij}=0 \mid \theta) \log (1-\hat{y}_{ij}) \bigr) \\
		&= \sum_{e_{ij} \in \mc{E}_\mc{U}} \bigl( \mb{A}^\mc{\bar{G}}_{ij} \log \hat{y}_{ij} + (1-\mb{A}^\mc{\bar{G}}_{ij}) \log (1-\hat{y}_{ij}) \bigr)
	\end{split}
	\label{eq:theorem1-proof-2}
	\end{equation}	
	where the second equality uses the fact that
	$\sum_{\mb{z} \setminus \{z_{ij}\}} \prod_{e_{kl} \in \mc{E}_\mc{U} \setminus \{e_{ij}\}}$ $\phi_{kl}(z_{kl} \mid \theta) = 1$
	(from Lemma~\ref{appendix-lemma1}), and the third equality uses Equation~(4).

	Using the result of Equation~(\ref{eq:theorem1-proof-2}), the expected log likelihood $Q(\theta^\mathrm{new} \mid \theta)$ reduces to the negative of the loss function $\mc{L}_E$ of \method:
	\small
	\begin{equation}
		Q(\theta^\mathrm{new} \mid \theta) \approx \sum_{e_{ij} \in \mc{E}_\mc{P}} \log \hat{y}_{ij}
		+ \sum_{e_{ij} \in \mc{E}_\mc{U}} \bigl( \mb{A}^\mc{\bar{G}}_{ij} \log \hat{y}_{ij} + (1-\mb{A}^\mc{\bar{G}}_{ij}) \log (1-\hat{y}_{ij}) \bigr)
	\label{eq:theorem1-proof-3}
	\end{equation}
	which ends the proof.
	Note that Equation~(\ref{eq:theorem1-proof-3}) uses $\mc{\bar{G}}$ which is approximated to $\mc{\bar{G}}'$ in \method.
\end{proof}

\subsection{Proof of Theorem~2}
\label{ssec:appendix-proof2}


\begin{proof}
	Let $n_o$ be the number of iterations in Algorithm~1, and $n_i$ be the number of gradient-descent updates for obtaining the model parameter $\theta^\mathrm{new}$ in line 7 of the algorithm.
	Each iteration of \method consists of two steps: 1) generating the expected graph structure $\bar{\mc{G}}$, and 2) training the link predictor $f$ using the approximated one of $\bar{\mc{G}}$.
	The time complexity of generating the expected graph structure is $O(d(M|\mc{V}|))$ since we compute the linking probabilities for node pairs where at least one node of each pair belongs to the set of nodes with top-$M$ largest degree.
	The time complexity for training $f$ in the $k$-th iteration is $O \left( n_i m d ((1+r k)|\mc{E}_\mc{P}| + d|\mc{V}|) \right)$ where $r$ is the increase factor of edges for approximating the expected graph structure $\bar{\mc{G}}$. 
	Note that the complexity for training $f$ is upper-bounded by $O \left( n_i m d ((1+r n_o)|\mc{E}_\mc{P}| + d|\mc{V}|) \right)$ since $1 + r k \leq 1 + r n_o$.
	As a result, the time complexity of \method is computed as
	\begin{equation*}
	\small
		O \left( n_o d( (n_i m + r n_o n_i m)|\mc{E}_\mc{P}| + ( n_i m d + M ) |\mc{V}| )  \right),
	\end{equation*}
	which ends the proof.
	
\end{proof}

\section{Symbols and Datasets}
\label{sec:appendix-data}

\begin{table*}[h]
\centering
\caption{Symbols and dataset summary.}

\begin{subtable}[t]{0.45\textwidth}
    \centering
    \caption{Symbols.}
    \scalebox{0.8}{
    \begin{tabular}{c|l}
        \toprule
        \textbf{Symbol} & \textbf{Description} \\ \midrule
        $\mc{G}_\mc{P} = (\mc{V}, \mc{E}_\mc{P})$ & Edge-incomplete graph with sets $\mc{V}$ of nodes and \\ 
        & $\mc{E}_\mc{P}$ of observed edges \\ 
        $\mc{E}_\mc{U}$ & Set of unconnected node pairs (unconnected edges) \\
        $e_{ij}$ & Edge between nodes $i$ and $j$ \\
        $L(\mc{G})$ & Corresponding line graph of $\mc{G}$ \\ \midrule
        $\mb{A}^{\mc{G}}$ & Adjacency matrix of $\mc{G}=(\mc{V}, \mc{E})$ \\ 
        & where $\mb{A}^{\mc{G}}_{ij} = 1$ if $e_{ij} \in \mc{E}$ \\ 
        $\mb{X}$ & Feature matrix for every node in $\mc{G}_\mc{P}$ \\ \midrule
        $f_\theta(\cdot, \cdot)$ & Link predictor parameterized by $\theta$ \\
        $\mc{L}(\cdot)$ & Objective function that \method aims to minimize \\
        $\bar{\mc{G}}$ & Expected graph structure \\
        $\bar{\mc{G}}'$ & Approximated version of $\bar{\mc{G}}$ \\ \bottomrule
    \end{tabular}
    }
    \label{subtable:symbols}
\end{subtable}
\hfill
\begin{subtable}[t]{0.45\textwidth}
    \centering
    \caption{Summary of datasets.}
    \begin{threeparttable}
	\scalebox{0.85}{
    \begin{tabular}{@{}l|rrrr@{}}
        \toprule
        \textbf{Datasets} & \textbf{Nodes} & \textbf{Edges} & \textbf{Features} & \textbf{Description} \\ \midrule
        PubMed\tnote{1} & 19,717 & 88,648 & 500 & Citation \\
        Cora-full\tnote{2} & 19,793 & 126,842 & 8,710 & Citation \\
        Chameleon\tnote{3} & 2,277 & 36,101 & 2,325 & Wikipedia \\
        Crocodile\tnote{3} & 11,631 & 191,506 & 500 & Wikipedia \\
        Facebook\tnote{4} & 22,470 & 342,004 & 128 & Social \\ \midrule
        Physics\tnote{5} & 34,493 & 495,924 & 8,415 & Citation \\
        ogbn-arxiv\tnote{6} & 169,343 & 1,166,243 & 128 & Citation \\ \bottomrule
    \end{tabular}
    }
    \begin{tablenotes} \scriptsize
        \item[1]\url{https://github.com/kimiyoung/planetoid}
        \item[2]\url{https://www.cs.cit.tum.de/daml/g2g/}
        \item[3]\url{https://snap.stanford.edu/data/wikipedia-article-networks.html}
        \item[4]\url{https://github.com/benedekrozemberczki/MUSAE}
        \item[5]\url{https://github.com/shchur/gnn-benchmark/raw/master/data/npz/}
        \item[6]\url{https://ogb.stanford.edu/docs/nodeprop/#ogbn-arxiv}
    \end{tablenotes}
    \end{threeparttable}
    \label{subtable:datasets}
\end{subtable}

\label{table:combined}
\end{table*}

\section{Detailed Settings of Experiments}
\label{sec:appendix-detail-setting}


The statistics of datasets are summarized in Table~\ref{subtable:datasets}.

\subsection{Implementation}
\label{sec:appendix-implementation}

We provide detailed settings of implementation for \method and the baselines.
All the experiments are conducted under a single GPU machine with GTX 1080 Ti.

\textbf{GCN+CE.} We use the GCN code implemented with torch-geometric package~\citep{Fey/Lenssen/2019}.
For each epoch, the model randomly samples $|\mc{E}_\mc{P}|$ negative samples (unconnected node pairs), and minimizes the cross entropy (CE) loss.

\textbf{GAT+CE.} We use the GAT code provided by torch-geometric package.
For each epoch, GAT+CE randomly samples $|\mc{E}_\mc{P}|$ negative samples, and minimizes the cross entropy loss.
We set the multi-head attention number as 8 with the mean aggregation strategy, and the dropout ratio as 0.6 following the original paper~\citep{Petar17}.

\textbf{SAGE+CE.} We use the GraphSAGE code implemented with torch-geometric package.
For each epoch, the model randomly samples $|\mc{E}_\mc{P}|$ negative samples, and minimizes the cross entropy loss.
We use the mean aggregation following the original paper~\citep{HamiltonYL17}.

\textbf{GAE \& VGAE.} We use the GAE and VGAE codes implemented with torch-geometric package.
We use GCN-based encoder and decoder for both GAE and VGAE following the original paper~\citep{KipfW16a}.
The number of layers and units for decoders are set to 2 and 16, respectively.

\textbf{ARGA \& ARGVA.} We use the ARGA and ARGVA codes implemented with torch-geometric package.
We use the same hyperparameter settings for the adversarial training as presented in the original paper~\citep{PanHLJYZ18}.

\textbf{VGNAE.} We use the VGNAE code implemented by the authors~\citep{AhnK21}.
The scaling constant $s$ is set to 1.8 following the original paper.

\textbf{Bagging-PU.} We reimplement Bagging-PU since there is no public implementation of authors.
We use GCN instead of SDNE~\citep{wang2016structural} for the node embedding model since SDNE is an unsupervised representation-based method, which limits the performance.
We use the mean aggregation following the original paper~\citep{gan2022positive}, and set the bagging size as 3.

\textbf{NESS.} We use the NESS code implemented by the authors~\citep{abs-2303-08958}.
We use GCN-based embedding model.
For the other hyperparameters, we use the default settings described in their paper.

\textbf{\method.} We use torch-geometric package to implement the weighted propagation of GCN.
The number of inner epochs is set to 200, while that of outer iteration is set to 10.
We increase the number $K$ of edges in the approximated version of expected graph $\bar{\mc{G}}$ in proportion to that of observed edges through the iterations: $K \leftarrow K + r|\mc{E}_\mc{P}|$ where $r$ is the increasing ratio.
We set $r=0.05$ in our experiments.
For the number $M$ of candidate nodes for generating the candidate edges, we set $M=100$.

\section{Further Experiments}
\label{sec:further-experiments}

\begin{table*}[t]
\centering
\caption{
    The link prediction accuracy of \method and the baselines on larger datasets.
    Bold numbers denote the best performance, and underlined ones denote the second-best performance.
    Note that \method demonstrates the highest accuracy across various settings, showing its efficacy in larger graphs.
}

\begin{subtable}[t]{0.48\textwidth}
    \centering
    \caption{Missing ratio $r_m$ = 0.1}
    \scalebox{0.75}{
    \begin{tabular}{@{}lcccc@{}}
        \toprule
        \multicolumn{1}{l|}{\multirow{2}{*}{\textbf{Model}}} & \multicolumn{2}{c}{\textbf{Physics}} & \multicolumn{2}{c}{\textbf{ogbn-arxiv}} \\
        \multicolumn{1}{l|}{} & AUROC & AUPRC & AUROC & AUPRC \\ \midrule
        \multicolumn{1}{l|}{GCN+CE} & 96.90 $\pm$ 0.19 & \ul{96.65 $\pm$ 0.23} & 80.58 $\pm$ 0.13 & 85.11 $\pm$ 0.10 \\
        \multicolumn{1}{l|}{GAT+CE} & 93.58 $\pm$ 0.46 & 92.23 $\pm$ 0.52 & 82.31 $\pm$ 0.22 & 79.46 $\pm$ 0.43 \\
        \multicolumn{1}{l|}{SAGE+CE} & 95.40 $\pm$ 0.47 & 94.95 $\pm$ 0.49 & 83.07 $\pm$ 1.60 & 81.01 $\pm$ 1.07 \\
        \multicolumn{1}{l|}{GAE} & 96.81 $\pm$ 0.13 & 96.56 $\pm$ 0.14 & 80.62 $\pm$ 0.14 & 85.20 $\pm$ 0.11 \\
        \multicolumn{1}{l|}{VGAE} & 95.00 $\pm$ 0.82 & 94.51 $\pm$ 0.89 & 80.29 $\pm$ 0.32 & 83.83 $\pm$ 0.27 \\
        \multicolumn{1}{l|}{ARGA} & 91.72 $\pm$ 0.61 & 90.57 $\pm$ 0.51 & 83.09 $\pm$ 1.18 & 86.13 $\pm$ 0.77 \\
        \multicolumn{1}{l|}{ARGVA} & 92.56 $\pm$ 1.38 & 91.84 $\pm$ 1.47 & 82.77 $\pm$ 1.71 & 85.74 $\pm$ 1.77 \\
        \multicolumn{1}{l|}{VGNAE} & 94.68 $\pm$ 0.69 & 93.87 $\pm$ 0.74 & 77.37 $\pm$ 0.10 & 81.43 $\pm$ 0.07 \\
        \multicolumn{1}{l|}{Bagging-PU} & 95.86 $\pm$ 0.20 & 96.00 $\pm$ 0.27 & 81.25 $\pm$ 0.24 & 85.47 $\pm$ 0.10 \\
        \multicolumn{1}{l|}{NESS} & \ul{96.96 $\pm$ 0.08} & 96.56 $\pm$ 0.10 & \ul{85.91 $\pm$ 0.08} & \ul{87.90 $\pm$ 0.12} \\ \midrule
        \multicolumn{1}{l|}{PULL (ours)} & \textbf{97.27 $\pm$ 0.07} & \textbf{97.12 $\pm$ 0.10} & \textbf{86.46 $\pm$ 0.04} & \textbf{89.08 $\pm$ 0.44} \\ \bottomrule
    \end{tabular}
    }
\end{subtable}
\hfill
\begin{subtable}[t]{0.48\textwidth}
    \centering
    \caption{Missing ratio $r_m$ = 0.2}
    \scalebox{0.75}{
    \begin{tabular}{@{}lcccc@{}}
        \toprule
        \multicolumn{1}{l|}{\multirow{2}{*}{\textbf{Model}}} & \multicolumn{2}{c}{\textbf{Physics}} & \multicolumn{2}{c}{\textbf{ogbn-arxiv}} \\
        \multicolumn{1}{l|}{} & AUROC & AUPRC & AUROC & AUPRC \\ \midrule
        \multicolumn{1}{l|}{GCN+CE} & 96.60 $\pm$ 0.09 & 96.32 $\pm$ 0.11 & 80.36 $\pm$ 0.20 & 84.99 $\pm$ 0.14 \\
        \multicolumn{1}{l|}{GAT+CE} & 93.55 $\pm$ 0.41 & 92.19 $\pm$ 0.49 & 82.49 $\pm$ 0.07 & 79.64 $\pm$ 0.22 \\
        \multicolumn{1}{l|}{SAGE+CE} & 95.13 $\pm$ 0.35 & 94.67 $\pm$ 0.42 & 83.31 $\pm$ 2.03 & 81.30 $\pm$ 1.42 \\
        \multicolumn{1}{l|}{GAE} & 96.57 $\pm$ 0.20 & 96.31 $\pm$ 0.25 & 80.43 $\pm$ 0.44 & 85.06 $\pm$ 0.28 \\
        \multicolumn{1}{l|}{VGAE} & 94.30 $\pm$ 0.59 & 93.73 $\pm$ 0.60 & 79.38 $\pm$ 0.30 & 83.29 $\pm$ 0.26 \\
        \multicolumn{1}{l|}{ARGA} & 91.75 $\pm$ 0.39 & 90.49 $\pm$ 0.52 & 82.91 $\pm$ 0.79 & 86.04 $\pm$ 0.47 \\
        \multicolumn{1}{l|}{ARGVA} & 92.65 $\pm$ 1.19 & 91.94 $\pm$ 1.26 & 81.57 $\pm$ 1.55 & 84.43 $\pm$ 0.90 \\
        \multicolumn{1}{l|}{VGNAE} & 94.48 $\pm$ 0.71 & 93.69 $\pm$ 0.69 & 76.58 $\pm$ 0.15 & 81.01 $\pm$ 0.12 \\
        \multicolumn{1}{l|}{Bagging-PU} & 95.59 $\pm$ 0.12 & 95.77 $\pm$ 0.14 & 80.84 $\pm$ 0.35 & 85.26 $\pm$ 0.22 \\
        \multicolumn{1}{l|}{NESS} & \ul{96.96 $\pm$ 0.08} & \ul{96.56 $\pm$ 0.10} & \textbf{85.81 $\pm$ 0.04} & \ul{87.81 $\pm$ 0.03} \\ \midrule
        \multicolumn{1}{l|}{PULL (ours)} & \textbf{97.01 $\pm$ 0.05} & \textbf{96.89 $\pm$ 0.07} & \ul{84.95 $\pm$ 0.66} & \textbf{88.80 $\pm$ 0.74} \\ \bottomrule
    \end{tabular}
    }
\end{subtable}

\label{table:large-graph}
\end{table*}

\begin{table*}[t]
\centering
\caption{
	The link prediction accuracy of \method and its variant \method-WS.
	\method-WS approximates $\bar{\mc{G}}$ by performing weighted random sampling of edges based on the linking probabilities.
	Bold numbers denote the best performance.
	\method outperforms \method-WS in every case.
}

\begin{subtable}[t]{0.48\textwidth}
    \centering
    \caption{Missing ratio $r_m$ = 0.1}
    \scalebox{0.75}{
    \begin{tabular}{@{}lllll@{}}
        \toprule
        \multicolumn{1}{l|}{\multirow{2}{*}{\textbf{Dataset}}} & \multicolumn{2}{c|}{\textbf{PULL-WS}} & \multicolumn{2}{c}{\textbf{PULL (proposed)}} \\
        \multicolumn{1}{l|}{} & \multicolumn{1}{c}{AUROC} & \multicolumn{1}{c|}{AUPRC} & \multicolumn{1}{c}{AUROC} & \multicolumn{1}{c}{AUPRC} \\ \midrule
        \multicolumn{1}{l|}{PubMed} & 96.54 $\pm$ 0.18 & \multicolumn{1}{l|}{96.80 $\pm$ 0.14} & \textbf{96.59 $\pm$ 0.19} & \textbf{96.83 $\pm$ 0.18} \\
        \multicolumn{1}{l|}{Cora-full} & 95.94 $\pm$ 0.31 & \multicolumn{1}{l|}{96.12 $\pm$ 0.32} & \textbf{96.06 $\pm$ 0.34} & \textbf{96.25 $\pm$ 0.35} \\
        \multicolumn{1}{l|}{Chameleon} & 97.69 $\pm$ 0.28 & \multicolumn{1}{l|}{97.68 $\pm$ 0.28} & \textbf{97.87 $\pm$ 0.33} & \textbf{97.83 $\pm$ 0.33} \\
        \multicolumn{1}{l|}{Crocodile} & 97.38 $\pm$ 0.31 & \multicolumn{1}{l|}{97.66 $\pm$ 0.26} & \textbf{98.31 $\pm$ 0.20} & \textbf{98.36 $\pm$ 0.22} \\
        \multicolumn{1}{l|}{Facebook} & 97.05 $\pm$ 0.15 & \multicolumn{1}{l|}{97.30 $\pm$ 0.14} & \textbf{97.41 $\pm$ 0.11} & \textbf{97.67 $\pm$ 0.09} \\ \bottomrule
    \end{tabular}
    }
\end{subtable}
\hfill
\begin{subtable}[t]{0.48\textwidth}
    \centering
    \caption{Missing ratio $r_m$ = 0.2}
    \scalebox{0.75}{
    \begin{tabular}{@{}lllll@{}}
        \toprule
        \multicolumn{1}{l|}{\multirow{2}{*}{\textbf{Dataset}}} & \multicolumn{2}{c|}{\textbf{PULL-WS}} & \multicolumn{2}{c}{\textbf{PULL (proposed)}} \\
        \multicolumn{1}{l|}{} & \multicolumn{1}{c}{AUROC} & \multicolumn{1}{c|}{AUPRC} & \multicolumn{1}{c}{AUROC} & \multicolumn{1}{c}{AUPRC} \\ \midrule
        \multicolumn{1}{l|}{PubMed} & 96.24 $\pm$ 0.14 & \multicolumn{1}{l|}{96.43 $\pm$ 0.14} & \textbf{96.28 $\pm$ 0.13} & \textbf{96.47 $\pm$ 0.17} \\
        \multicolumn{1}{l|}{Cora-full} & 95.31 $\pm$ 0.35 & \multicolumn{1}{l|}{95.62 $\pm$ 0.33} & \textbf{95.39 $\pm$ 0.32} & \textbf{95.65 $\pm$ 0.31} \\
        \multicolumn{1}{l|}{Chameleon} & 97.66 $\pm$ 0.18 & \multicolumn{1}{l|}{97.65 $\pm$ 0.18} & \textbf{97.89 $\pm$ 0.14} & \textbf{97.87 $\pm$ 0.16} \\
        \multicolumn{1}{l|}{Crocodile} & 97.28 $\pm$ 0.22 & \multicolumn{1}{l|}{97.57 $\pm$ 0.21} & \textbf{98.19 $\pm$ 0.13} & \textbf{98.29 $\pm$ 0.16} \\
        \multicolumn{1}{l|}{Facebook} & 96.95 $\pm$ 0.10 & \multicolumn{1}{l|}{97.20 $\pm$ 0.09} & \textbf{97.30 $\pm$ 0.07} & \textbf{97.59 $\pm$ 0.06} \\ \bottomrule
    \end{tabular}
    }
\end{subtable}

\label{table:weighted-random-sampling}
\end{table*}

\subsection{Link Prediction in Larger Network}
\label{sec:large-network}

We additionally perform link prediction on larger graph datasets compared to those used in Table~1.
The ogbn-arxiv dataset is a citation network consisting of 169,343 nodes and 1,166,243 edges, where each node represents an arXiv paper and an edge indicates that one paper cites another one.
Each node has 128-dimensional feature vector, which is derived by averaging the embeddings of the words in its title and abstract.
Physics is a co-authorship graph based on the Microsoft Academic Graph from the KDD Cup 2016 challenge 3.
Physics contains 34,493 nodes and 495,924 edges where each node represents an author, and they are connected if they co-authored a paper.
We summarize the statistics of the larger networks in Table~\ref{table:datasets}. 
For \method, we set the maximum number of iterations as 20.
For the baselines, we set the maximum number of epochs as 4,000.
This is because a larger data size requires a greater number of epochs to train the link predictor.
\method incorporates the early stopping with a patience of one, for more stable learning.
For other cases, we used the same settings as in the Experiments section.

Table~\ref{table:large-graph} presents the link prediction performance of \method and the baselines in ogbn-arxiv and Physics.
Note that \method consistently shows superior performance than the baselines in terms of both AUROC and AUPRC.
This indicates that \method is also effective in handling larger graphs.

\subsection{Weighted Random Sampling for Constructing $\bar{\mc{G}}'$}
\label{sec:weighted-random-sampling}

\method keeps the top-$K$ edges with the highest linking probabilities to approximate $\bar{\mc{G}}$.
In this section, we compare \method with \method-WS (\method with Weighted Sampling) that constructs the approximated version $\bar{\mc{G}}'$ by performing weighted random sampling of edges from $\bar{\mc{G}}$ based on the linking probabilities.
As the weighted random sampling empowers \method to mitigate the excessive self-reinforcement in the link predictor, we additionally exclude the loss term $\mc{L}_C$, which serves the same purpose.
We conduct experiments five times with random seeds, while using the same experimental settings as in the Experiments section.

Table~\ref{table:weighted-random-sampling} shows that \method-WS presents marginal performance decrease compared to \method.
This indicates that keeping the top-$K$ edges having the highest linking probabilities with an additional loss term $\mc{L}_C$ shows better link prediction performance than performing weighted random sampling of edges without $\mc{L}_C$.
However, \method-WS is an efficient variant of \method that uses only a single loss term $\mc{L}'_E$ instead of the proposed loss $\mc{L} = \mc{L}_E' + \mc{L}_C$.

\blue{
\subsection{Performance with other ratio $r_m$ of missing edges}
\label{sec:other-missing-ratio}

\begin{table*}[t]
\centering
\caption{
	\blue{
	The link prediction accuracy of \method and the baselines in terms of AUROC and AUPRC, with the ratio $r_m$ of test missing edges set to $ r_m = 0.2 $. 
	Bold values indicate the best performance, while underlined values represent the second-best accuracy. 
	PULL achieves superior performance, outperforming all baselines in most cases.
	}
}

\scalebox{0.65}{
\begin{tabular}{@{}lcccccccccc@{}}
\toprule
\multicolumn{11}{c}{Missing ratio $r_m$ = 0.2} \\ \midrule
\multicolumn{1}{l|}{\multirow{2}{*}{\textbf{Model}}} & \multicolumn{2}{c}{\textbf{PubMed}} & \multicolumn{2}{c}{\textbf{Cora-full}} & \multicolumn{2}{c}{\textbf{Chameleon}} & \multicolumn{2}{c}{\textbf{Crocodile}} & \multicolumn{2}{c}{\textbf{Facebook}} \\
\multicolumn{1}{l|}{} & AUROC & AUPRC & AUROC & AUPRC & AUROC & AUPRC & AUROC & AUPRC & AUROC & AUPRC \\ \midrule
\multicolumn{1}{l|}{GCN+CE} & \ul{96.14 $\pm$ 0.19} & \ul{96.25 $\pm$ 0.21} & 94.92 $\pm$ 0.64 & 95.01 $\pm$ 0.75 & 96.85 $\pm$ 0.36 & 96.78 $\pm$ 0.45 & 97.06 $\pm$ 0.46 & 97.37 $\pm$ 0.40 & \ul{97.00 $\pm$ 0.23} & 97.26 $\pm$ 0.22 \\
\multicolumn{1}{l|}{GAT+CE} & 90.67 $\pm$ 0.37 & 89.32 $\pm$ 0.45 & 93.99 $\pm$ 0.35 & 93.47 $\pm$ 0.40 & 91.75 $\pm$ 1.65 & 91.28 $\pm$ 1.42 & 91.09 $\pm$ 1.50 & 91.80 $\pm$ 1.09 & 92.41 $\pm$ 0.48 & 92.19 $\pm$ 0.56 \\
\multicolumn{1}{l|}{SAGE+CE} & 85.90 $\pm$ 0.67 & 87.22 $\pm$ 0.93 & 93.71 $\pm$ 0.60 & 94.25 $\pm$ 0.66 & 96.11 $\pm$ 0.51 & 95.68 $\pm$ 0.63 & 95.92 $\pm$ 0.67 & 96.48 $\pm$ 0.62 & 94.96 $\pm$ 0.46 & 95.06 $\pm$ 0.55 \\
\multicolumn{1}{l|}{GAE} & 96.10 $\pm$ 0.15 & 96.22 $\pm$ 0.21 & 95.15 $\pm$ 0.39 & 95.24 $\pm$ 0.48 & 96.76 $\pm$ 0.42 & 96.60 $\pm$ 0.57 & 96.36 $\pm$ 0.65 & 96.74 $\pm$ 0.56 & 96.87 $\pm$ 0.38 & 97.12 $\pm$ 0.37 \\
\multicolumn{1}{l|}{VGAE} & 94.12 $\pm$ 1.13 & 94.17 $\pm$ 1.10 & 91.71 $\pm$ 3.94 & 91.73 $\pm$ 3.72 & 96.21 $\pm$ 0.22 & 96.01 $\pm$ 0.32 & 95.21 $\pm$ 0.45 & 95.40 $\pm$ 0.86 & 95.89 $\pm$ 0.54 & 96.11 $\pm$ 0.52 \\
\multicolumn{1}{l|}{ARGA} & 93.00 $\pm$ 0.58 & 92.43 $\pm$ 0.54 & 90.93 $\pm$ 0.62 & 90.40 $\pm$ 0.63 & 94.72 $\pm$ 0.34 & 94.37 $\pm$ 0.41 & 95.98 $\pm$ 0.47 & 95.63 $\pm$ 0.39 & 91.90 $\pm$ 0.51 & 91.98 $\pm$ 0.46 \\
\multicolumn{1}{l|}{ARGVA} & 93.19 $\pm$ 1.30 & 93.38 $\pm$ 1.17 & 87.56 $\pm$ 4.49 & 87.53 $\pm$ 4.21 & 94.07 $\pm$ 0.51 & 94.09 $\pm$ 0.40 & 94.85 $\pm$ 0.14 & 94.00 $\pm$ 0.15 & 92.68 $\pm$ 1.82 & 93.11 $\pm$ 1.69 \\
\multicolumn{1}{l|}{VGNAE} & 95.70 $\pm$ 0.39 & 95.62 $\pm$ 0.38 & \ul{95.40 $\pm$ 1.04} & 95.13 $\pm$ 1.06 & 97.45 $\pm$ 0.30 & \ul{97.13 $\pm$ 0.35} & 96.41 $\pm$ 0.77 & 95.91 $\pm$ 1.36 & 95.22 $\pm$ 0.88 & 95.33 $\pm$ 0.87 \\
\multicolumn{1}{l|}{Bagging-PU} & 94.02 $\pm$ 0.34 & 94.38 $\pm$ 0.41 & 92.56 $\pm$ 0.54 & 94.48 $\pm$ 0.67 & \ul{97.13 $\pm$ 0.47} & 97.08 $\pm$ 0.54 & \ul{97.48 $\pm$ 0.41} & \ul{97.79 $\pm$ 0.37} & 96.95 $\pm$ 0.21 & \ul{97.31 $\pm$ 0.21} \\
\multicolumn{1}{l|}{NESS} & 95.62 $\pm$ 0.04 & 95.45 $\pm$ 0.03 & \textbf{96.07 $\pm$ 0.24} & \textbf{96.09 $\pm$ 0.27} & 96.68 $\pm$ 0.29 & 96.64 $\pm$ 0.29 & 96.07 $\pm$ 0.11 & 96.78 $\pm$ 0.03 & 95.79 $\pm$ 0.06 & 96.20 $\pm$ 0.03 \\ \midrule
\multicolumn{1}{l|}{PULL (proposed)} & \textbf{96.28 $\pm$ 0.13} & \textbf{96.47 $\pm$ 0.17} & 95.39 $\pm$ 0.32 & \ul{95.65 $\pm$ 0.31} & \textbf{97.89 $\pm$ 0.14} & \textbf{97.87 $\pm$ 0.16} & \textbf{98.19 $\pm$ 0.13} & \textbf{98.29 $\pm$ 0.16} & \textbf{97.30 $\pm$ 0.07} & \textbf{97.59 $\pm$ 0.06} \\ \bottomrule
\end{tabular}
}

\label{table:performance-ratio}
\end{table*}

\begin{table*}[t]
\centering
\caption{
	\blue{
	The performance improvement of baseline models with the integration of \method, with the ratio $r_m$ of test missing edges set to $ r_m = 0.2 $.
	The best performance is highlighted in bold.
	\method consistently enhances the performance of the baseline models across all evaluations.
	}
}

\scalebox{0.65}{
\begin{tabular}{@{}lcccccccccc@{}}
\toprule
\multicolumn{11}{c}{Missing ratio $r_m$ = 0.2} \\ \midrule
\multicolumn{1}{l|}{\multirow{2}{*}{\textbf{Model}}} & \multicolumn{2}{c}{\textbf{PubMed}} & \multicolumn{2}{c}{\textbf{Cora-full}} & \multicolumn{2}{c}{\textbf{Chameleon}} & \multicolumn{2}{c}{\textbf{Crocodile}} & \multicolumn{2}{c}{\textbf{Facebook}} \\
\multicolumn{1}{l|}{} & AUROC & AUPRC & AUROC & AUPRC & AUROC & AUPRC & AUROC & AUPRC & AUROC & AUPRC \\ \midrule
\multicolumn{1}{l|}{GAE} & 96.10 $\pm$ 0.15 & 96.22 $\pm$ 0.21 & 95.15 $\pm$ 0.39 & 95.24 $\pm$ 0.48 & 96.76 $\pm$ 0.42 & 96.60 $\pm$ 0.57 & 96.36 $\pm$ 0.65 & 96.74 $\pm$ 0.56 & 96.87 $\pm$ 0.38 & 97.12 $\pm$ 0.37 \\
\multicolumn{1}{l|}{GAE+PULL} & \textbf{96.23 $\pm$ 0.10} & \textbf{96.47 $\pm$ 0.12} & \textbf{95.44 $\pm$ 0.41} & \textbf{95.69 $\pm$ 0.51} & \textbf{98.00 $\pm$ 0.15} & \textbf{98.03 $\pm$ 0.15} & \textbf{98.18 $\pm$ 0.19} & \textbf{98.31 $\pm$ 0.17} & \textbf{97.26 $\pm$ 0.12} & \textbf{97.53 $\pm$ 0.12} \\ \midrule
\multicolumn{1}{l|}{VGAE} & 94.12 $\pm$ 1.13 & 94.17 $\pm$ 1.10 & 91.71 $\pm$ 3.94 & 91.73 $\pm$ 3.72 & 96.21 $\pm$ 0.22 & 96.01 $\pm$ 0.32 & 95.21 $\pm$ 0.45 & 95.40 $\pm$ 0.86 & 95.89 $\pm$ 0.54 & 96.11 $\pm$ 0.52 \\
\multicolumn{1}{l|}{VGAE+PULL} & \textbf{95.30 $\pm$ 0.65} & \textbf{95.35 $\pm$ 0.66} & \textbf{93.19 $\pm$ 3.46} & \textbf{93.37 $\pm$ 3.30} & \textbf{96.97 $\pm$ 0.56} & \textbf{96.97 $\pm$ 0.63} & \textbf{97.24 $\pm$ 0.67} & \textbf{97.44 $\pm$ 0.54} & \textbf{96.51 $\pm$ 0.23} & \textbf{96.67 $\pm$ 0.23} \\ \midrule
\multicolumn{1}{l|}{VGNAE} & 95.70 $\pm$ 0.39 & 95.62 $\pm$ 0.38 & 95.40 $\pm$ 1.04 & 95.13 $\pm$ 1.06 & 97.45 $\pm$ 0.30 & 97.13 $\pm$ 0.35 & 96.41 $\pm$ 0.77 & 95.91 $\pm$ 1.36 & 95.22 $\pm$ 0.88 & 95.33 $\pm$ 0.87 \\
\multicolumn{1}{l|}{VGNAE+PULL} & \textbf{95.84 $\pm$ 0.31} & \textbf{95.74 $\pm$ 0.26} & \textbf{95.65 $\pm$ 0.70} & \textbf{95.42 $\pm$ 0.73} & \textbf{97.70 $\pm$ 0.31} & \textbf{97.36 $\pm$ 0.36} & \textbf{96.67 $\pm$ 1.32} & \textbf{96.13 $\pm$ 2.18} & \textbf{95.72 $\pm$ 0.44} & \textbf{95.78 $\pm$ 0.41} \\ \bottomrule
\end{tabular}
}

\label{table:baseline-improvement-ratio}
\end{table*}

In the experiment section, we demonstrated the superior performance of \method with a test missing edge ratio $r_m = 0.1$.
To further validate the robustness of \method across varying levels of missing lines, we conducted additional experiments with a larger ratio, $r_m = 0.2$.

The results in Table~\ref{table:performance-ratio} highlight that \method consistently achieves the best link prediction accuracy compared to other baselines.
Furthermore, as shown in Table~\ref{table:baseline-improvement-ratio}, the integration of \method into existing baseline methods significantly enhances their performance, underscoring the adaptability and robustness of \method across different levels of missing data.
}

\end{document}